%% file: main.tex
\documentclass[11pt]{article}
\usepackage[final]{acl}

\usepackage{microtype}
\usepackage{amsmath,amssymb,amsfonts,amsthm}
\newtheorem{lemma}{Lemma}
\newtheorem{theorem}{Theorem}
\newtheorem{assumption}{Assumption}
\usepackage{algorithm}
\usepackage{algorithmic}
\usepackage{booktabs}
\usepackage{graphicx}
\usepackage{subcaption}
\usepackage[table]{xcolor}
\usepackage{siunitx}
\usepackage{enumitem}
\usepackage{wrapfig}
\usepackage{pifont}
\usepackage{xspace}
\usepackage{titletoc}
\usepackage{tcolorbox}
\tcbuselibrary{skins}

\usepackage{fontspec}
\usepackage{polyglossia}
\setmainlanguage{english}
\setotherlanguage{bengali}
\newfontfamily\bengalifont[Script=Bengali]{NotoSerifBengali.ttf}
\newcommand{\beng}[1]{{\bengalifont\mdseries #1}}
\newcommand{\cmark}{{\color{green!60!black}\ding{51}}}
\newcommand{\xmark}{{\color{red!65!black}\ding{55}}}
\newcommand{\herr}[1]{{\color{red!65!black}\textbf{#1}}}
\newcommand{\hcorr}[1]{{\color{green!45!black}\textbf{#1}}}

\definecolor{darkblue}{rgb}{0, 0, 0.5}
\colorlet{rowvanilla}{white}
\colorlet{rowbaseline}{yellow!10}
\colorlet{rowftse}{cyan!8}
\definecolor{tgtBN}{HTML}{2E75B6}
\definecolor{tgtRU}{HTML}{C55A11}
\definecolor{tgtID}{HTML}{538135}
\hypersetup{colorlinks=true, citecolor=darkblue, linkcolor=darkblue, urlcolor=darkblue}

\title{Unveiling Language Routing Isolation in Multilingual MoE Models for Interpretable Subnetwork Adaptation}

\author{Kening Zheng$^1$ \quad Wei-Chieh Huang$^1$ \quad Jiahao Huo$^1$ \quad Zhonghao Li$^3$ \quad Henry Peng Zou$^1$ \\
\textbf{Yibo Yan$^2$ \quad Xin Zou$^2$ \quad Jungang Li$^2$ \quad Junzhuo Li$^2$ \quad Hanrong Zhang$^1$} \\
\textbf{Xuming Hu$^2$ \quad Philip S. Yu$^{1}$\thanks{Corresponding author.}} \\
\\
$^1$University of Illinois Chicago \quad $^2$HKUST (Guangzhou) \quad $^3$University of Maryland}

\newcommand{\rise}{\textsc{RISE}\xspace}

\begin{document}

\maketitle

\begin{abstract}
Mixture-of-Experts (MoE) models exhibit striking performance disparities across languages, yet the internal mechanisms driving these gaps remain poorly understood.
In this work, we conduct a systematic analysis of expert routing patterns in MoE models, revealing a phenomenon we term \textit{Language Routing Isolation}, in which high- and low-resource languages tend to activate largely disjoint expert sets.
Through layer-stratified analysis, we further show that routing patterns first converge and then diverge across model depth.
Building on these findings, we propose \textbf{\rise} (routing isolation-guided subnetwork enhancement), a framework that identifies and adapts language-specific expert subnetworks.
\rise uses a tripartite selection strategy: specificity scores select language-specific experts in shallow and deep layers, while overlap scores select universal experts in middle layers.
By training only the selected subnetwork while freezing the rest of the model, \rise improves low-resource performance while preserving other-language capabilities.
Across 10 languages, \rise achieves target-language F1 gains of up to 10.85\% with minimal cross-lingual degradation.
\end{abstract}

\input{contents/1introduction}

\input{contents/2background}

\input{contents/3routing_analysis}

\input{contents/4method}
\input{contents/5experiments}
\input{contents/6conclusion}

\input{contents/8limitations}

\bibliography{references}

\input{contents/7appendix}

\end{document}

%% file: contents/1introduction.tex
\section{Introduction}
The MoE paradigm has emerged as a powerful architectural approach in modern language modeling, demonstrating remarkable capabilities across a wide range of tasks, including reasoning, code generation, and multilingual understanding~\citep{phi3_2024,deepseekv3_2024,deepseekv2_2024}.
By leveraging sparse expert activation, MoE models enable scaling to tens or hundreds of billions of parameters while maintaining practical inference efficiency, making them particularly attractive for resource-intensive applications.
Despite their empirical success, MoE models exhibit significant performance disparities across languages~\citep{colm2025Crosslingual}.
As shown in Table~\ref{tab:mgsm_two_models}, MoE models show accuracy gaps exceeding 80 percentage points on the MGSM benchmark between high-resource languages (e.g., English) and low-resource languages (e.g., Bengali and Swahili).

Recent studies have explored expert mechanisms in multilingual MoE models.
\citet{bandarkar2026multilingual} analyzed layer-wise activation probability distributions to characterize routing patterns, while \citet{nusSteering} identified top experts based on total activation frequencies to understand expert specialization.
However, these analyses focus either on layer-wise activation distributions or on global activation frequency, without clarifying the distinct functional roles that different model depths play in multilingual processing.

To address this gap, we analyze multilingual routing behavior in MoE models from two complementary perspectives.
At the global level, we uncover the \textit{routing isolation} phenomenon, where high- and low-resource languages rely on nearly orthogonal sets of experts.
Through layer-stratified analysis, we further reveal \textit{layer-wise convergence--divergence}: routing converges in middle layers, which are more language-agnostic, but diverges in shallow and deep layers, which capture language-specific encoding and generation.
This pattern clarifies how MoE models process multilingual inputs across depth and provides a principled basis for multilingual adaptation in practice.

\begin{table*}[t]
\centering
\small
\setlength{\tabcolsep}{5pt}
\begin{tabular}{lrrrrrrrrrrr}
\toprule
\rowcolor{gray!20}
\textbf{Model} & \textbf{BN} & \textbf{DE} & \textbf{EN} & \textbf{ES} & \textbf{FR} & \textbf{JA} & \textbf{RU} & \textbf{SW} & \textbf{TH} & \textbf{ZH} & \textbf{Avg.} \\
\midrule
Phi-3.5-MoE-instruct & 1 & 79 & 88.5 & 81.5 & 73 & 56 & 77 & 1 & 26.5 & 65.5 & 54.9 \\
Qwen3-30B-A3B        & 46 & 88.5 & 96.5 & 91.5 & 82 & 83.5 & 92.5 & 48 & 87.5 & 86 & 80.3 \\
\bottomrule
\end{tabular}
\caption{Accuracy on MGSM across languages. Language codes: BN (Bengali), DE (German), EN (English), ES (Spanish), FR (French), JA (Japanese), RU (Russian), SW (Swahili), TH (Thai), ZH (Chinese).}
\label{tab:mgsm_two_models}
\end{table*}

Building on these findings, we propose \rise\footnote{Our code will be released at \url{https://github.com/init-neok/rise}.} (\textbf{R}outing \textbf{I}solation-guided \textbf{S}ubnetwork \textbf{E}nhancement), a method designed to improve low-resource language performance while preserving the model’s capabilities in high-resource languages.

\rise adopts a hierarchical selection strategy to identify the expert subnetwork that predominantly supports the target language.
Specifically, in shallow and deep layers, \rise uses a specificity score to identify experts that are particularly associated with the target low-resource language.
In middle layers, where routing patterns are more shared across languages, \rise uses an overlap score to select universal experts.
Extensive experiments show that \rise consistently improves target-language performance while preserving performance on other languages and tasks.

In summary, this paper makes the following contributions:

\begin{itemize}

    \item[\ding{182}]\textbf{Empirical finding.} We analyze multilingual routing behavior in MoE models from both global and layer-wise perspectives, uncovering two key phenomena: \textit{routing isolation} and \textit{layer-wise convergence--divergence}. Together, they expose a consistent routing structure that shapes multilingual computation in MoE models and motivates our adaptation strategy in \rise.

    \item[\ding{183}]\textbf{Methodological contribution.} Building on these observations, we propose RISE, a layer-stratified expert selection method for low-resource language adaptation that selects distinctive experts in shallow and deep layers while preserving shared experts in the middle layers.

    \item[\ding{184}]\textbf{Empirical validation.} We conduct comprehensive experiments across 10 languages using multiple datasets, validating both the effectiveness and efficiency of our routing-isolation approach.
\end{itemize}

%% file: contents/2background.tex
\section{Related Work}
\vspace{-2mm}
\paragraph{Expert Specialization and Routing Analysis.} Understanding how experts specialize and how routing mechanisms function has been an active area of research.
Early studies focused on analyzing expert utilization patterns and their correlation with input characteristics~\citep{shazeer2017outrageouslylargeneuralnetworks}.
More recent work has explored the interpretability of expert behaviors in multilingual settings.
\citet{bandarkar2026multilingual} analyze multilingual routing with an entropy-normalized Jensen--Shannon divergence over post-softmax router weights, comparing each non-English sequence with its paired English translation.
\citet{nusSteering} identified top experts based on total activation frequencies and proposed layerwise steering techniques to understand and control expert specialization across languages.

\paragraph{Relation to \citet{bandarkar2026multilingual}.}
Our layer-wise observations are largely confirmatory: that work already reports that routing similarity tracks language similarity, and that early and late layers are more language-specific while middle layers align more strongly across languages.
We differ in the routing signal and in how the resulting structure is used.
Rather than a divergence over post-softmax router weights measured against paired English translations, we compute the Jaccard overlap of the discrete top-$K$ expert support that is \emph{actually executed}, aggregated over a corpus into an all-pairs language matrix instead of an English-referenced comparison.
And rather than intervening on router logits at inference time, we exploit this structure during training, freezing the router and updating only the selected experts.
Our contribution is therefore not the discovery of layer-wise routing structure per se, but the characterization of executed-support isolation across all language pairs and its conversion into a subnetwork selection rule for adaptation.

%% file: contents/3routing_analysis.tex
\section{Routing Analysis}
\label{sec:analysis}
\vspace{-2mm}
To understand the multilingual capability disparities in MoE models, we analyze expert routing patterns from two complementary perspectives: global-level activation statistics and layer-wise routing dynamics.
These two views operate at different granularities and are complementary rather than contradictory: routing isolation is defined over the globally aggregated top-$K$ routed support across all layers, whereas the layer-wise analysis decomposes similarity at each depth relative to a high-resource reference language.

\subsection{\textit{Routing Isolation} at Global Level}

For a given language $l$, let $c_{l,j}$ denote the total activation count of expert $j$ aggregated across all layers and samples.
We select the top-$K$ most frequently activated experts globally:

\begin{equation}
\mathcal{E}_l^{\mathrm{global}}
= \operatorname*{arg\,topK}_{j \in [N_e L]} \; c_{l,j}
\end{equation}

where $N_e$ is the number of experts per layer, $L$ is the total number of MoE layers, and $K$ is the total number of experts considered in the statistics.
To quantify the routing overlap between languages, we compute the Jaccard similarity between their top-$K$ expert sets:

\begin{equation}
J(l_1, l_2) = \frac{|\mathcal{E}_{l_1}^{\text{global}} \cap \mathcal{E}_{l_2}^{\text{global}}|}{|\mathcal{E}_{l_1}^{\text{global}} \cup \mathcal{E}_{l_2}^{\text{global}}|}
\end{equation}

where $J(l_1, l_2) \in [0, 1]$, with 0 indicating no overlap and 1 indicating complete overlap.

We partition languages into high-resource (English, Chinese) and low-resource (Bengali, Swahili, Thai, etc.) groups based on their proportions in the training corpus.
Figure~\ref{fig:global_routing} (a) shows the global expert activation overlap across languages.
We provide more examples of routing isolation, together with reference points that calibrate these overlap magnitudes, in Appendix~\ref{app:additional_routing_analysis}.

\begin{figure*}[t]
\centering
\begin{subfigure}[b]{0.38\textwidth}
    \centering
    \includegraphics[height=4.7cm,keepaspectratio,trim=5 16.5 5 5,clip]{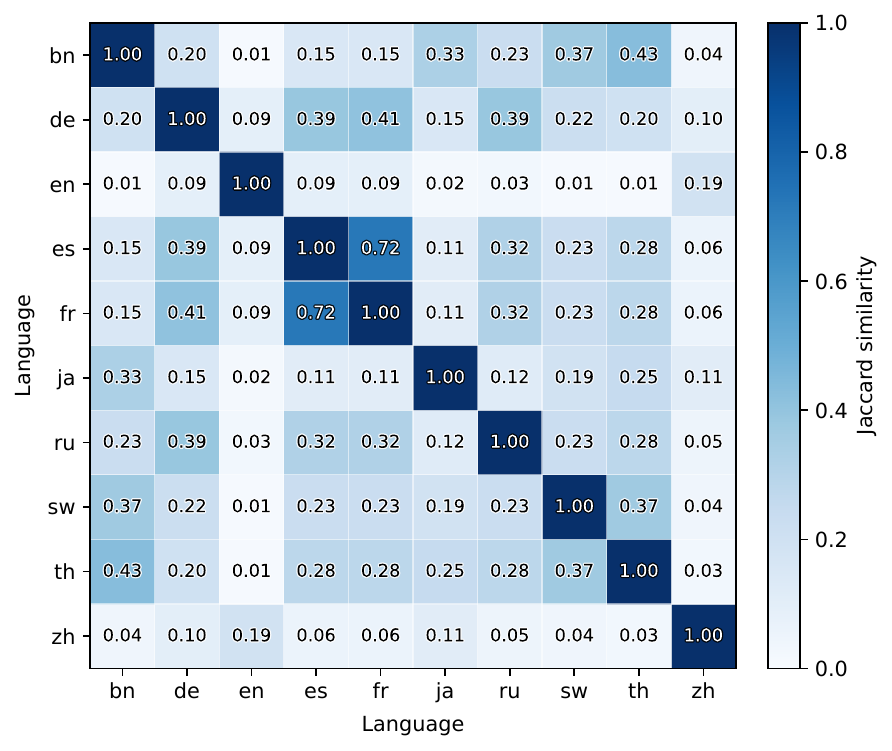}
    \caption{Global-level expert activation overlap across languages in MGSM.}
\end{subfigure}
\hfill
\begin{subfigure}[b]{0.59\textwidth}
    \centering
    \includegraphics[height=4.95cm,keepaspectratio]{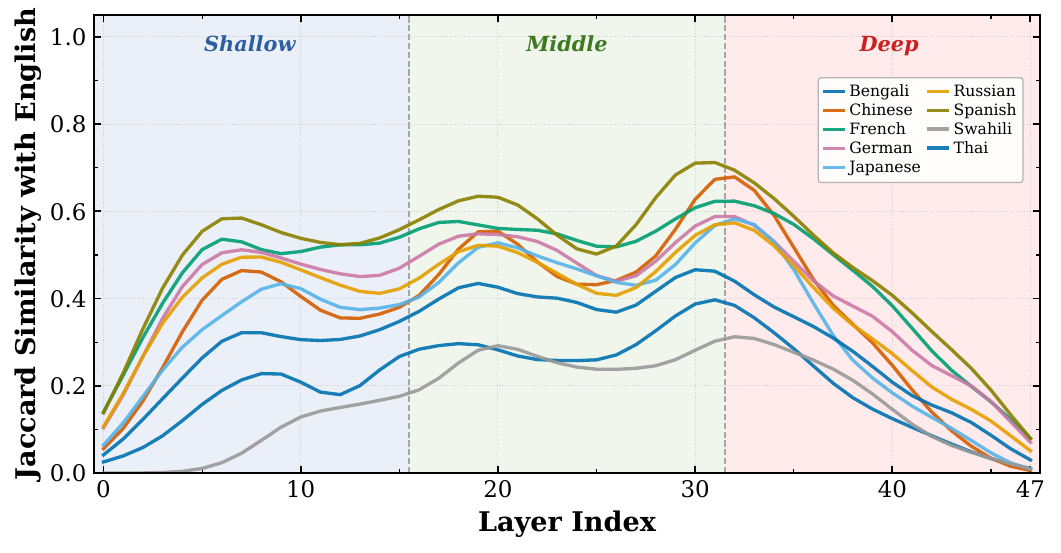}
    \caption{Layer-wise expert routing similarity with English.}
\end{subfigure}
\caption{Comprehensive routing analysis of \texttt{Qwen3-30B-A3B}: global-level (left) expert activation overlap and layer-wise (right) routing similarity with English.}
\label{fig:global_routing}
\end{figure*}

Key observations from the global analysis point to a clear phenomenon between high- and low-resource languages: \ding{182} As high-resource languages, English and Chinese basically don't share experts and rotate among many experts with little overlap in their Top-30 choices, reflecting \textit{flexible, well-trained routing}; \ding{183} Low-resource languages repeatedly \textit{trigger the same small subset}, implying more fixed patterns likely from limited training; \ding{184} overlap across the two resource groups is negligible, showing their expert sets are almost orthogonal, here we call this \textbf{routing isolation}. Besides this, we also found that languages from the same family (e.g., Indo-European) exhibit greater similarity in expert selection patterns, consistent with previous linguistic studies~\citep{tang-etal-2024-language,nusSteering}, thereby grounding our analysis in established theoretical frameworks.

\subsection{\textit{Layer-wise Convergence--Divergence} Pattern}

To understand how expert routing patterns evolve across different model depths, we analyze the layer-wise expert overlap with respect to English as the reference language.
For each layer $i$ within language $l$, we compute the Jaccard similarity with English:

\begin{equation}
J_l(i) \;=\; \frac{\bigl|\mathcal{E}_l^{(i)} \cap \mathcal{E}_{\text{en}}^{(i)}\bigr|}{\bigl|\mathcal{E}_l^{(i)} \cup \mathcal{E}_{\text{en}}^{(i)}\bigr|}
\end{equation}

This yields a per-layer similarity curve $\mathbf{J}_l = \bigl[\,J_l(0),\; J_l(1),\; \ldots,\; J_l(L-1)\,\bigr]$ that captures routing dynamics across layers.
The average similarity within each region $\mathcal{L} \in \{\mathcal{L}_{\text{shallow}}, \mathcal{L}_{\text{middle}}, \mathcal{L}_{\text{deep}}\}$ is:

\begin{equation}
\bar{J}_l^{\,\mathcal{L}} \;=\; \frac{1}{|\mathcal{L}|} \sum_{i \in \mathcal{L}} J_l(i)
\end{equation}

We plot the similarity curve $\mathbf{J}_l$ and compute the average similarity $\bar{J}_l^{\mathcal{L}}$ for each region, as shown in Figure~\ref{fig:global_routing}~(b) and Table~\ref{tab:layerwise_overlap}.

\begin{table}[t]
\centering
\small
\setlength{\tabcolsep}{4pt}
\begin{tabular}{lccc}
\toprule
\textbf{Language} & \textbf{Shallow} & \textbf{Middle} & \textbf{Deep} \\
\midrule
Bengali (BN) & 0.12 & 0.22 & 0.05 \\
Swahili (SW) & 0.04 & 0.19 & 0.04 \\
Thai (TH) & 0.25 & 0.33 & 0.12 \\
Russian (RU) & 0.41 & 0.43 & 0.17 \\
Japanese (JA) & 0.37 & 0.37 & 0.10 \\
\bottomrule
\end{tabular}
\caption{Average layer-wise expert overlap with English. Additional results in Table~\ref{tab:tydiqa_layerwise_overlap}--\ref{tab:phi_mgsm_layerwise_overlap}.}
\label{tab:layerwise_overlap}
\end{table}

As shown in Table~\ref{tab:layerwise_overlap}, routing similarity with English is consistently lower in the shallow and deep layers, while the middle layers exhibit noticeably higher overlap across all languages.
This pattern suggests that the functional stratification observed in dense transformers is also reflected in MoE routing behavior.
Specifically, the shallow and deep layers tend to capture more language-specific processing, whereas the middle layers appear to support more shared cross-lingual representations.
These findings provide a strong empirical basis for our layer-aware expert selection strategy in \rise.

%% file: contents/4method.tex
\section{Method}
\label{sec:method}

Building on the routing isolation identified in Section~\ref{sec:analysis}, which suggests that MoE models contain language-specific expert subnetworks, we present a systematic approach to identify and train these subnetworks for low-resource languages, thereby improving target-language performance while minimally affecting other capabilities. Our method has three stages: (1) collecting routing statistics to understand language-expert affinities, (2) selecting language-specific and shared experts using layer-aware analysis, and (3) training the selected subnetwork while freezing the rest of the model. Figure~\ref{fig:overview} summarizes the full pipeline, and we detail each stage below.

\subsection{Routing Statistics Collection}
\label{sec:routing_collection}

The first stage of \rise involves analyzing the routing behavior of the pre-trained model across multiple languages. This analysis reveals which experts are preferentially activated for different languages, providing the foundation for our selection strategy.

\paragraph{Data Preparation.}
We collect evaluation datasets in $M$ languages $\Lambda = \{\lambda_1, \lambda_2, \ldots, \lambda_M\}$, including the target language $\lambda^*$. These datasets should cover the same or similar tasks to ensure comparable routing patterns.
For each language $\lambda \in \Lambda$, we run inference on the corresponding dataset and record the \emph{discrete routing decisions} for all MoE layers.
Specifically, let $\mathcal{R}^{(l)}_t$ denote the set of experts selected by the router at layer $l$ for token $t$. We define a binary activation indicator $g^{(l)}_{t,i} = \mathbf{1}[i \in \mathcal{R}^{(l)}_t]$ and aggregate it into an activation frequency over all $T_\lambda$ tokens:
\begin{equation}
    a^{(l)}_{\lambda,i} \;=\; \frac{1}{T_\lambda} \sum_{t=1}^{T_\lambda} g^{(l)}_{t,i} \;\in\; [0,1] \label{eq:activation}
\end{equation}
where $a^{(l)}_{\lambda,i}$ represents the empirical probability that expert $i$ is activated when processing language $\lambda$.

This yields a routing profile matrix $\mathbf{A}^{(l)} \in \mathbb{R}^{M \times N}$ for each layer, where each row corresponds to a language and each column to an expert. These matrices capture language--expert affinity patterns that inform our selection strategy.

\subsection{Layer-Aware Expert Selection}
\label{sec:expert_selection}
\begin{figure*}[t]
    \centering
    \includegraphics[width=\linewidth]{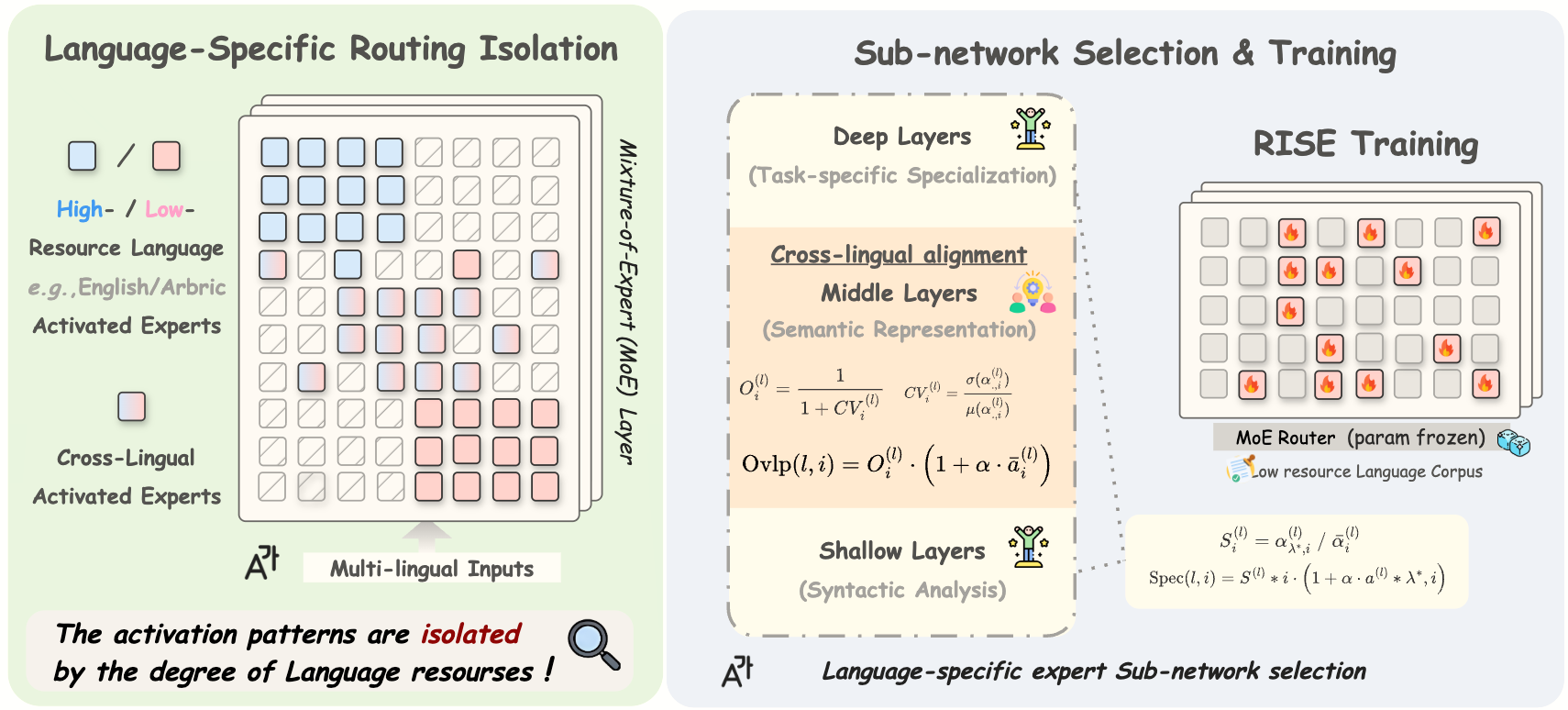}
    \caption{Overview of \rise. (a) We first collect routing statistics across multiple languages. (b) Based on layer-aware analysis, we select language-specific experts in shallow/deep layers and cross-lingual shared experts in middle layers. (c) Only the selected experts are trained.}
    \label{fig:overview}
\end{figure*}

\input{contents/table3_tydiqa}

Based on the analysis in Section~\ref{sec:analysis}, we partition the $L$ MoE layers into three groups: shallow layers $\mathcal{L}_{\text{shallow}} = [0, L_1]$, middle layers $\mathcal{L}_{\text{middle}} = (L_1, L_2]$, and deep layers $\mathcal{L}_{\text{deep}} = (L_2, L-1]$.

\paragraph{Shallow and deep layers.} For shallow and deep layers, we seek experts that are \emph{preferentially activated} by the target language $\lambda^*$. We quantify this preference using the \textbf{specificity score} $S^{(l)}_{\lambda^*,i} = {a^{(l)}_{\lambda^*,i}}/{\bar{a}^{(l)}_i}$, where $\bar{a}^{(l)}_i$ is the average activation probabilities of corresponding expert $i$ in all languages, serving as a baseline for comparison.
The specificity score compares the activation probability of expert $i$ on the target language $\lambda^*$ to its average activation probability across all languages.
When $S^{(l)}_{\lambda^*,i}$ is \emph{greater than} $1$, expert $i$ is preferentially activated by $\lambda^*$, indicating language-specific behavior; when $S^{(l)}_{\lambda^*,i}$ is close to $1$, the expert is activated at a similar rate across languages and thus behaves in a language-agnostic manner; and when $S^{(l)}_{\lambda^*,i}$ is \emph{less than} $1$, the expert is more preferred by other languages.

\paragraph{Middle layers.} For middle layers, we seek experts that capture \emph{language-agnostic} representations shared across all languages. We measure this property using the \textbf{overlap score} based on the coefficient of variation:
\begin{equation}
    O^{(l)}_i = \frac{1}{1 + \mathrm{\widehat{c_{\rm v}}}^{(l)}_i}
    \quad \text{where} \quad
    \mathrm{\widehat{c_{\rm v}}}^{(l)}_i = \frac{\sigma(a^{(l)}_{\cdot,i})}{\mu(a^{(l)}_{\cdot,i})}
    \label{eq:overlap}
\end{equation}

Here, $\widehat{c_{\rm v}}$ represents the coefficient of variation, while $\sigma(\cdot)$ and $\mu(\cdot)$ denote the standard deviation and mean of expert $i$'s activation probabilities across all languages. The coefficient of variation measures relative dispersion:
\begin{itemize}
    \item Low $\widehat{c_{\rm v}}$ (high $O^{(l)}_i$): Uniform activation across languages $\rightarrow$ \emph{shared expert}.
    \item High $\widehat{c_{\rm v}}$ (low $O^{(l)}_i$): Activation varies significantly $\rightarrow$ \emph{language-specific expert}.
\end{itemize}

\paragraph{Subnetwork Selection.} The specificity and overlap scores measure relative preferences but do not account for the absolute activation magnitude. An expert with high specificity but very low overall activation may be less important than one with moderate specificity but high activation. To address this, we define composite scores that incorporate both factors:

\begin{align}
    \mathrm{Spec}(l, i, \lambda^*) &= S^{(l)}_{\lambda^*,i} \cdot \left(1 + \alpha \cdot a^{(l)}_{\lambda^*,i}\right), \label{eq:score_overlap}\\
    \mathrm{Ovlp}(l, i) &= O^{(l)}_i \cdot \left(1 + \alpha \cdot \bar{a}^{(l)}_i\right) \nonumber
\end{align}
where $\alpha > 0$ is a hyperparameter controlling the importance of absolute activation magnitude. In our experiments, we set $\alpha = 10$.

Given a total budget of $K$ experts to train, we allocate them across layer groups using predefined ratios $(\rho_s, \rho_m, \rho_d)$ satisfying $\rho_s + \rho_m + \rho_d = 1$. The complete selection procedure is detailed in Algorithm~\ref{alg:expert_selection}.

With the selected expert set $\mathcal{E} = \{(l_1, i_1), \ldots, (l_K, i_K)\}$, we proceed to train only these experts while keeping all other parameters frozen.

We use standard causal language modeling loss on the target language data $\mathcal{D}_{\lambda^*}$ and our loss function is defined as:
\begin{equation}
    \mathcal{L}(\Theta_{\text{train}}) = -\mathbb{E}_{x \sim \mathcal{D}_{\lambda^*}} \left[ \sum_{t=1}^{|x|} \log P_\Theta(x_t \mid x_{<t}) \right]
\end{equation}
where only $\Theta_{\text{train}}$ receives gradient updates.
We detail the complete procedure in Algorithm~\ref{alg:expert_selection} and visualize the distribution of selected experts in Appendix~\ref{app:ftse_expert_distribution}.

%% file: contents/table3_tydiqa.tex
\begin{table*}[t]
\centering
\small
\setlength{\tabcolsep}{4.1pt}
\begin{tabular}{l
                S[table-format=2.2] S[table-format=2.2] S[table-format=2.2]
                S[table-format=2.2] S[table-format=2.2] S[table-format=2.2]
                S[table-format=2.2] S[table-format=2.2] S[table-format=2.2]
                S[table-format=2.2]}
\toprule
\rowcolor{gray!20}
\textbf{Setting} &
\textbf{AR} & \textbf{BN} & \textbf{EN} & \textbf{FI} & \textbf{ID} & \textbf{KO} & \textbf{RU} & \textbf{SW} & \textbf{TE} & \textbf{Avg.} \\
\midrule
\multicolumn{11}{l}{\textbf{Qwen3-30B-A3B (TyDiQA-GoldP, F1 \%)}} \\
Vanilla         & 49.49 & 51.51 & 24.15 & 22.98 & 26.86 & 48.35 & 27.04 & 27.05 & 58.61 & 37.34 \\
\rowcolor{gray!10}
Random (64, \textcolor{tgtBN}{BN})  & 48.91 & \textbf{\textcolor{tgtBN}{52.17}} & 23.87 & 22.31 & 25.61 & 48.19 & 25.63 & 27.73 & 59.03 & 37.05 \\
Random (128, \textcolor{tgtBN}{BN}) & 50.31 & \textbf{\textcolor{tgtBN}{53.52}} & 24.43 & 22.69 & 26.49 & 48.31 & 26.08 & 27.22 & 59.71 & 37.64 \\
\rowcolor{gray!10}
TopK (128, \textcolor{tgtBN}{BN})   & 50.20 & \textbf{\textcolor{tgtBN}{52.74}} & 24.75 & 23.13 & 27.86 & 47.51 & 26.94 & 26.85 & 59.11 & 37.68 \\
ESFT~\citep{esft}   & 49.51 & \textbf{\textcolor{tgtBN}{51.79}} & 24.08 & 23.26 & 26.89 & 47.33 & 27.09 & 25.85 & 59.70 & 37.28 \\
\midrule
\rowcolor{gray!10}
\rise~(128, \textcolor{tgtBN}{BN})  & 49.15 & \textbf{\textcolor{tgtBN}{54.23}} & 24.04 & 23.39 & 27.61 & 47.77 & 27.22 & 27.40 & 60.39 & 37.91 \\
\rise~(128, \textcolor{tgtRU}{RU})  & 51.06 & 53.62 & 23.88 & 24.15 & 29.55 & 49.08 & \textbf{\textcolor{tgtRU}{29.62}} & 29.30 & 60.66 & 38.99 \\
\rowcolor{gray!10}
\rise~(128, \textcolor{tgtID}{ID})  & 51.59 & 53.97 & 25.65 & 23.58 & \textbf{\textcolor{tgtID}{31.79}} & 47.75 & 29.54 & 30.13 & 59.95 & 39.33 \\
\midrule\midrule
\multicolumn{11}{l}{\textbf{Phi-3.5-MoE-Instruct (TyDiQA-GoldP, F1 \%)}} \\
Vanilla         & 36.75 & 36.04 & 16.94 & 19.19 & 22.09 & 19.75 & 16.19 & 11.55 & 7.68  & 20.69 \\
\rowcolor{gray!10}
Random (16, \textcolor{tgtBN}{BN})  & 38.26 & \textbf{\textcolor{tgtBN}{41.93}} & 18.90 & 19.90 & 23.23 & 26.84 & 18.75 & 11.64 & 9.34  & 23.20 \\
Random (32, \textcolor{tgtBN}{BN})  & 37.53 & \textbf{\textcolor{tgtBN}{43.61}} & 18.89 & 20.10 & 23.28 & 24.56 & 18.45 & 11.44 & 8.93  & 22.98 \\
\rowcolor{gray!10}
TopK (16, \textcolor{tgtBN}{BN})    & 39.63 & \textbf{\textcolor{tgtBN}{49.51}} & 18.32 & 20.78 & 23.59 & 34.58 & 18.91 & 11.56 & 10.78 & 25.30 \\
ESFT~\citep{esft}    & 38.29 & \textbf{\textcolor{tgtBN}{45.44}} & 18.10 & 20.12 & 23.15 & 28.84 & 19.10 & 12.24 & 10.09 & 23.93 \\
\midrule
\rowcolor{gray!10}
\rise~(16, \textcolor{tgtBN}{BN})   & 37.43 & \textbf{\textcolor{tgtBN}{46.89}} & 17.55 & 19.64 & 22.75 & 21.10 & 17.26 & 11.56 & 9.68  & 22.65 \\
\rise~(16, \textcolor{tgtRU}{RU})   & 37.68 & 34.83 & 17.63 & 20.85 & 22.85 & 21.98 & \textbf{\textcolor{tgtRU}{17.81}} & 10.70 & 8.32  & 21.41 \\
\rowcolor{gray!10}
\rise~(16, \textcolor{tgtID}{ID})   & 39.43 & 36.83 & 19.00 & 21.51 & \textbf{\textcolor{tgtID}{22.33}} & 24.50 & 20.78 & 11.03 & 8.49  & 22.66 \\
\bottomrule
\end{tabular}
\vspace{2pt}
\caption{\textbf{TyDiQA-GoldP multilingual QA performance (F1, \%).} Each row corresponds to a training setting; notation \textit{(budget, lang)} denotes the number of experts selected by \rise and the target language, e.g., (128, BN) selects 128 experts routed by Bengali and trains only those experts on Bengali data. The \textcolor{tgtBN}{\textbf{colored bold}} value indicates the target language column for each training setting (\textcolor{tgtBN}{blue}$=$BN, \textcolor{tgtRU}{orange}$=$RU, \textcolor{tgtID}{green}$=$ID); the language code in the first column is colored accordingly.}
\label{tab:tydiqa_main_split}
\end{table*}

%% file: contents/5experiments.tex
\section{Experiments}

In this section, we conduct extensive experiments to answer five research questions. \textbf{RQ1:} Can \rise improve performance on the target low-resource language? \textbf{RQ2:} Does \rise preserve performance on non-target languages and broader tasks? \textbf{RQ3:} Do the experts identified by \rise specialize in the target language? \textbf{RQ4:} Is each layer-group component of the \rise expert subnet indispensable? \textbf{RQ5:} How sensitive is \rise to its key hyperparameters selection?

\subsection{Settings}

\paragraph{Datasets} We evaluate on multilingual benchmarks: \textbf{TyDiQA-GoldP}~\citep{tydiqa}, an extractive QA dataset covering 9 languages (AR, BN, EN, FI, ID, KO, RU, SW, TE); and \textbf{MGSM}~\citep{mgsm}, a math reasoning dataset covering 10 languages (BN, DE, EN, ES, FR, JA, RU, SW, TH, ZH).
To verify that \rise does not degrade general capabilities, we further evaluate on \textbf{TriviaQA}, \textbf{MMLU}, \textbf{HellaSwag}, and \textbf{ARC}.

\paragraph{Comparison.} To demonstrate the effectiveness of \rise, we compare it against random expert selection, top-$K$ most-activated expert selection, and variants that change the expert budget or target language under the \rise framework.
We also include ESFT~\citep{esft} as an expert-selection baseline and LoRA as a representative PEFT baseline in Table~\ref{tab:mgsm_bn_tydiqa_transfer}.
\rise operates at the level of \emph{which} experts to train, while LoRA addresses \emph{how} parameters are updated; the two are orthogonal and can be combined.
Appendix~\ref{app:why_not_lora} provides the full discussion.

\paragraph{Implementation Details.} All experiments were conducted on a single NVIDIA H200 GPU.
For GPU memory cost information, we list the details in Table~\ref{tab:gpu_memory_usage}.
Training was performed for 3 epochs with a per-device batch size of 2 and gradient accumulation of 8 steps (effective batch size of 16), using a learning rate of $2 \times 10^{-5}$ and \texttt{bfloat16} mixed precision.
Further details on the training budget, expert allocation, and layer grouping are provided in Table~\ref{tab:layer_aware_settings} in Appendix~\ref{app:layer_settings}. All hyperparameter settings are held fixed across different target languages and backbone models to ensure a fair comparison.
\subsection{Performance and Cross-lingual Preservation (RQ1 \& RQ2)}

To answer RQ1 and RQ2, we comprehensively compare \rise against other methods across two multilingual MoE models.

Table~\ref{tab:tydiqa_main_split} presents the main results. We report the following observations.
\textbf{Obs.~\ding{182} \rise consistently improves target-language performance.}
As shown in Table~\ref{tab:tydiqa_main_split}, \rise achieves the best Bengali F1 of 54.23\% on \texttt{Qwen3-30B-A3B}, outperforming all competing methods. On \texttt{Phi-3.5-MoE-Instruct}, \rise similarly improves Bengali F1 by 10.85\% over the Vanilla baseline, confirming that routing-based expert selection reliably identifies the parameters most critical for the target language. The gains are consistent across multiple target languages: \rise~(128, RU) and \rise~(128, ID) yield the best per-language F1 on Russian (29.62\%) and Indonesian (31.79\%), respectively, demonstrating that the method generalizes beyond a single low-resource setting.

\begin{table}[t]
\centering
\small
\setlength{\tabcolsep}{3pt}
\begin{tabular}{c c c c c c}
\toprule
\rowcolor{gray!20}
\textbf{Setting} & \textbf{TriviaQA} & \textbf{MMLU} & \textbf{HellaSwag} & \textbf{ARC} & \textbf{Avg.} \\
\midrule
Qwen         & 59.60 & 74.40 & 81.50 & 85.76 & 75.32 \\
\rowcolor{gray!10}
\rise                   & 59.00 & 74.65 & 81.50 & 86.78 & 75.48 \\
$\Delta$               & \textcolor{teal}{$-$0.60} & \textcolor{orange}{$+$0.25} & 0.00 & \textcolor{orange}{$+$1.02} & \textcolor{orange}{$+$0.17} \\
\midrule
\rowcolor{gray!10}
Phi   & 65.50 & 77.50 & 70.70 & 90.85 & 76.14 \\
\rise                   & 65.70 & 76.67 & 70.60 & 91.53 & 76.13 \\
\rowcolor{gray!10}
$\Delta$               & \textcolor{orange}{$+$0.20} & \textcolor{teal}{$-$0.83} & \textcolor{teal}{$-$0.10} & \textcolor{orange}{$+$0.68} & \textcolor{teal}{$-$0.01} \\
\bottomrule
\end{tabular}
\caption{\textbf{The Comparison Performance on General Abilities.} TriviaQA (world knowledge), MMLU (academic reasoning), HellaSwag (commonsense), ARC (science QA).}
\label{tab:general_ability}
\end{table}
\textbf{Obs.~\ding{183} \rise delivers targeted improvements without cross-lingual or cross-task degradation.}
Table~\ref{tab:mgsm_bn_tydiqa_transfer} in Appendix~\ref{app:mgsm_crosstask_transfer} shows that \rise trained exclusively on TyDiQA incurs virtually no loss on the held-out MGSM benchmark: \rise~(128, BN) achieves an average accuracy of 80.7\%, marginally above the Vanilla baseline (80.2\%), confirming that the selected subnetworks capture language-specific rather than task-general computation.
LoRA, by contrast, induces severe cross-lingual interference: on \texttt{Phi-3.5-MoE-Instruct}, Thai accuracy collapses by 16.5 percentage points (26.5\%$\to$10.0\%), and on \texttt{Qwen3-30B-A3B}, Swahili drops by 7.0 points (48.0\%$\to$41.0\%).
General capability is equally well preserved: Table~\ref{tab:general_ability} shows that the benchmark scores change only marginally after \rise training, with per-task shifts bounded within 1.02 points, indicating that selective expert updates leave shared, task-agnostic computation intact.

\subsection{Causal Expert Verification (RQ3)}
\textbf{Obs.~\ding{184} The \rise-selected experts are causally responsible for target-language computation.}
To verify that the identified subnet is genuinely language-specific, we pruned the experts belonging to the selected expert subnetwork in both backbone models and re-evaluated them on TyDiQA-GoldP, as shown in Table~\ref{tab:pruned_experts_tydiqa}.
The target language suffers catastrophic collapse on Bengali, while other languages decline only moderately, mirroring the selectivity observed during training.
This bidirectional evidence, improvement upon activation and collapse upon ablation, confirms that \rise isolates the true language-specific subnetwork rather than a task-general or spuriously correlated subset of parameters.
\begin{table*}[t]
\centering
\small
\setlength{\tabcolsep}{4pt}
\resizebox{\textwidth}{!}{%
\begin{tabular}{llllllllll}
\toprule
\rowcolor{gray!20}
\textbf{Setting} & \textbf{AR} & \textbf{BN} & \textbf{EN} & \textbf{FI} & \textbf{ID} & \textbf{KO} & \textbf{RU} & \textbf{SW} & \textbf{TE} \\
\midrule
\multicolumn{10}{l}{\textbf{Qwen3-30B-A3B (TyDiQA-GoldP, F1 \%)}} \\
Vanilla         & 49.49 & 51.51 & 24.15 & 22.98 & 26.86 & 48.35 & 27.04 & 27.05 & 58.61 \\
\rowcolor{gray!10}
\rise~(128, \textcolor{tgtBN}{BN})  & 49.15 & {\textbf{\textcolor{tgtBN}{54.23}}} & 24.04 & 23.39 & 27.61 & 47.77 & 27.22 & 27.40 & 60.39 \\
Pruned~(128, \textcolor{tgtBN}{BN})
  & 26.83\textsubscript{\scriptsize\textcolor{teal}{$\downarrow$22.32}}
  & {\textbf{\textcolor{tgtBN}{1.00}}}\textsubscript{\scriptsize\textcolor{teal}{$\downarrow$53.23}}
  & 24.63\textsubscript{\scriptsize\textcolor{orange}{$\uparrow$0.59}}
  & 19.37\textsubscript{\scriptsize\textcolor{teal}{$\downarrow$4.02}}
  & 30.99\textsubscript{\scriptsize\textcolor{orange}{$\uparrow$3.38}}
  & 12.47\textsubscript{\scriptsize\textcolor{teal}{$\downarrow$35.30}}
  & 17.87\textsubscript{\scriptsize\textcolor{teal}{$\downarrow$9.35}}
  & 21.09\textsubscript{\scriptsize\textcolor{teal}{$\downarrow$6.31}}
  & 7.78\textsubscript{\scriptsize\textcolor{teal}{$\downarrow$52.61}} \\
\midrule
\multicolumn{10}{l}{\textbf{Phi-3.5-MoE-Instruct (TyDiQA-GoldP, F1 \%)}} \\
Vanilla         & 36.75 & 36.04 & 16.94 & 19.19 & 22.09 & 19.75 & 16.19 & 11.55 & 7.68 \\
\rowcolor{gray!10}
\rise~(16, \textcolor{tgtBN}{BN})   & 37.43 & {\textbf{\textcolor{tgtBN}{46.89}}} & 17.55 & 19.64 & 22.75 & 21.10 & 17.26 & 11.56 & 9.68 \\
Pruned~(16, \textcolor{tgtBN}{BN})
  & 12.03\textsubscript{\scriptsize\textcolor{teal}{$\downarrow$25.40}}
  & {\textbf{\textcolor{tgtBN}{1.85}}}\textsubscript{\scriptsize\textcolor{teal}{$\downarrow$45.04}}
  & 16.53\textsubscript{\scriptsize\textcolor{teal}{$\downarrow$1.02}}
  & 19.40\textsubscript{\scriptsize\textcolor{teal}{$\downarrow$0.24}}
  & 18.79\textsubscript{\scriptsize\textcolor{teal}{$\downarrow$3.96}}
  & 16.63\textsubscript{\scriptsize\textcolor{teal}{$\downarrow$4.47}}
  & 7.85\textsubscript{\scriptsize\textcolor{teal}{$\downarrow$9.41}}
  & 3.46\textsubscript{\scriptsize\textcolor{teal}{$\downarrow$8.10}}
  & 1.02\textsubscript{\scriptsize\textcolor{teal}{$\downarrow$8.66}} \\
\bottomrule
\end{tabular}
}
\caption{\textbf{Causal expert verification on TyDiQA-GoldP (F1, \%).} We compare \rise with its pruned version, where the selected Bengali expert subnetwork is ablated. The \textcolor{tgtBN}{\textbf{colored bold}} value indicates the target-language column.}
\label{tab:pruned_experts_tydiqa}
\end{table*}

\subsection{Indispensability of Layer-Group Components (RQ4)}

\begin{figure*}[h]
\centering
\includegraphics[width=\textwidth]{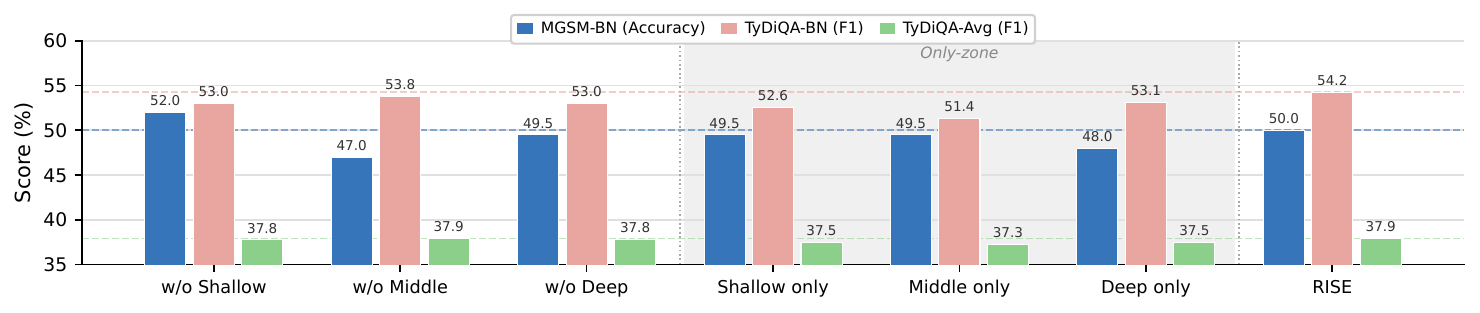}
\caption{Grouped comparison of layer-wise expert subset combinations. \textit{w/o} means removing the corresponding layer group; \textit{Only}: retaining only the corresponding layer group.}
\label{fig:ablation_layer_combo}
\end{figure*}

To answer RQ4, we conduct an ablation study on the contribution of each layer group by training variants of \rise that omit or retain only one group at a time. Results are reported in Figure~\ref{fig:ablation_layer_combo}.
\textbf{Obs.~\ding{185} All three layer groups contribute to \rise, with shallow and deep layers being most critical for target-language adaptation.}
Removing the shallow or deep layer experts causes the biggest TyDiQA-BN performance drops, whereas \textit{w/o Middle} yields the smallest degradation.
The \textit{w/o Middle} result further corroborates the routing analysis in Section~\ref{sec:analysis}: middle layers exhibit language-agnostic, convergent routing and are therefore less critical for language-specific adaptation.
By contrast, experts selected in shallow and deep layers capture language-specific syntactic and semantic patterns that are essential for low-resource performance. These results confirm that the layer-aware design of \rise is non-redundant: each group plays a distinct and necessary role, and no single group alone suffices to match the full method.

\subsection{Sensitivity Analysis (RQ5)}
To answer RQ5, we examine the sensitivity of \rise to key hyperparameters. Figure~\ref{fig:ablation_all} presents the activation scale factor $\alpha$ and the number of selected training experts $K$; the layer-group budget allocation ablation is deferred to Appendix~\ref{app:sensitivity_analysis}.

\begin{figure}[t]
\centering
\begin{subfigure}[t]{0.48\linewidth}
    \centering
    \includegraphics[width=\linewidth]{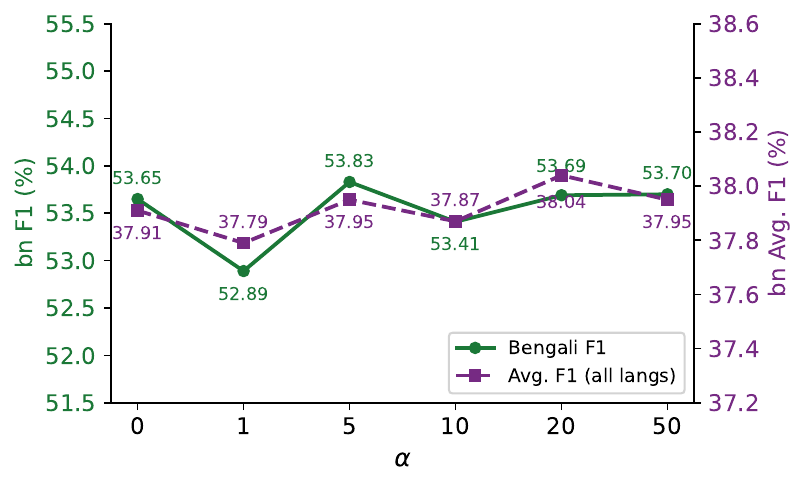}
    \caption{Scale factor $\alpha$.}
    \label{fig:ablation_alpha}
\end{subfigure}
\hfill
\begin{subfigure}[t]{0.48\linewidth}
    \centering
    \includegraphics[width=\linewidth]{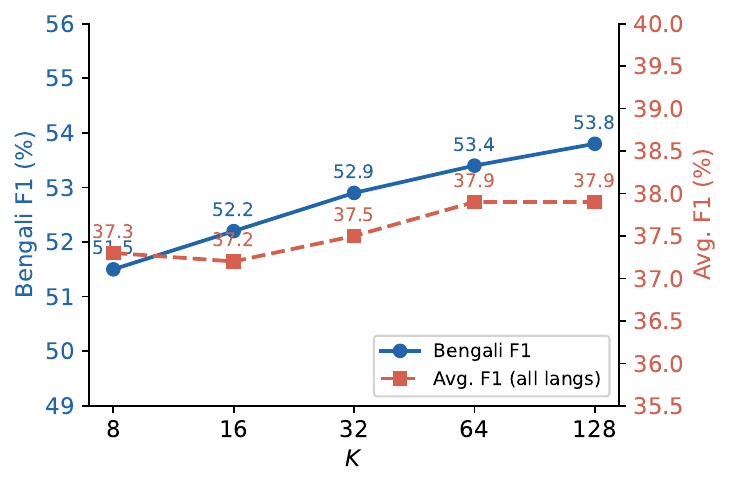}
    \caption{Selected experts $K$.}
    \label{fig:ablation_k}
\end{subfigure}
\caption{Sensitivity analysis for two key hyperparameters in the main text. \textbf{(a)} Activation scale factor $\alpha$. \textbf{(b)} Total number of selected experts $K$.}
\label{fig:ablation_all}
\end{figure}

\textbf{Obs.~\ding{186} \rise is robust to hyperparameter choices within a reasonable operating range.}
As shown in Figure~\ref{fig:ablation_all}(a), performance is stable across a broad range of $\alpha$ values, with a modest optimum that reflects the intended balance between activation frequency and routing weight in the composite selection score. Figure~\ref{fig:ablation_all}(b) demonstrates that performance improves as $K$ increases up to 128, beyond which returns diminish. The budget allocation ablation in Appendix~\ref{app:sensitivity_analysis} shows that concentrating the budget predominantly on shallow and deep layers yields the best results while a uniform allocation underperforms, consistent with the findings in RQ4.

\subsection{Qualitative Case Study}

Appendix~\ref{case_study} presents four qualitative cases comparing the Vanilla baseline and \rise on Bengali MGSM and TyDiQA-GoldP.
Two recurring failure modes are identified: \textbf{mathematical reasoning errors} (e.g., misinterpreting percentage increases, generation truncation) and \textbf{language coherence failures} (e.g., question echoing, incoherent multi-step outputs).
\rise corrects both categories, suggesting that activating the Bengali-specific expert subnetwork improves not only aggregate F1 but also output coherence, consistent with the routing isolation hypothesis.
We provide the full case study in Appendix~\ref{case_study} to illustrate the qualitative improvements enabled by \rise.

\subsection{Composition with LoRA}
\label{sec:rise_lora}

\begin{table}[t]
\centering
\small
\setlength{\tabcolsep}{4pt}
\begin{tabular}{lcccc}
\toprule
& \multicolumn{2}{c}{\textbf{TyDiQA-GoldP}} & \multicolumn{2}{c}{\textbf{MGSM}} \\
\cmidrule(lr){2-3}\cmidrule(lr){4-5}
\textbf{Method} & \textbf{BN} & \textbf{Avg.} & \textbf{BN} & \textbf{Avg.} \\
\midrule
Vanilla & 51.51 & 37.34 & 46.06 & 80.20 \\
\rowcolor{gray!10}
\rise~(128, BN) & 54.23 & \textbf{37.91} & 55.04 & \textbf{80.71} \\
\rise~+ LoRA ($r{=}16$) & \textbf{54.92} & 37.49 & \textbf{56.53} & 80.55 \\
\bottomrule
\end{tabular}
\caption{\textbf{Composing \rise with LoRA on \texttt{Qwen3-30B-A3B}.} LoRA adapters are attached \emph{only} to the \rise-selected expert subnetwork, using the identical selected-expert set and training data as \rise~(128, BN). MGSM is a held-out task, evaluated without task-specific training.}
\label{tab:rise_lora}
\end{table}

\rise and LoRA address orthogonal design questions: \rise determines \emph{which} experts to adapt, a selection problem answered from routing structure, whereas LoRA determines \emph{how} to parameterize the update to whatever is being adapted.
The informative experiment is therefore not to pit them against each other but to compose them, and we report this composition here; Appendix~\ref{app:why_not_lora} details why LoRA is not treated as a competing baseline.
Concretely, we attach LoRA adapters ($r=16$) to the \texttt{gate}/\texttt{up}/\texttt{down\_proj} matrices of the 128 \rise-selected experts (384 adapted linear layers) and freeze everything else.
This run uses the identical selected-expert set and training data as \rise~(128, BN), so the only difference from our main method is full-rank versus low-rank parameterization of the update.

\textbf{The two components compose without disturbing routing-level selectivity.}
As shown in Table~\ref{tab:rise_lora}, applying LoRA inside the selected subnetwork further improves the target language, raising Bengali from 54.23 to 54.92 F1 on TyDiQA-GoldP and from 55.04 to 56.53 on held-out MGSM, at the cost of a marginal 0.42-point decrease in TyDiQA average F1.
Crucially, the selectivity that motivates \rise is retained: held-out MGSM average remains at 80.55, above the 80.20 of the Vanilla baseline, indicating that changing \emph{how} the selected experts are updated does not disturb the gradient isolation that routing-based \emph{selection} provides (Appendix~\ref{gradient_isolation}).
We do not report a parameter-matched LoRA baseline, since LoRA performance does not improve monotonically with rank and matching the trainable-parameter count of 128 fully updated experts would require substantially larger ranks (e.g., $r=128$ or $r=256$), making strict parameter matching of limited diagnostic value.

%% file: contents/6conclusion.tex
\section{Conclusion}
\label{sec:conclusion}

We identify \textbf{language routing isolation} in multilingual MoE models, where high- and low-resource languages naturally follow distinct expert-routing pathways.
Building on this mechanism, we propose \rise, a layer-aware framework that selects and trains target-adaptive expert subnetworks while freezing the rest of the model.
Experiments on TyDiQA and MGSM show that \rise improves low-resource performance while preserving non-target and general capabilities, demonstrating that routing-level interpretability can directly support efficient multilingual adaptation.
Overall, these results show that routing statistics can serve not only as a diagnostic tool for multilingual MoE behavior, but also as a practical guide for selective adaptation under tight parameter budgets.

%% file: contents/8limitations.tex
\section*{Limitations}

Our study covers two open multilingual MoE models, Qwen3-30B-A3B and Phi-3.5-MoE-Instruct, and we do not claim that the layer-wise convergence--divergence pattern is universal: \citet{bandarkar2026multilingual} report that it does not hold uniformly across MoE families (e.g., OLMoE).
\rise does not presuppose a fixed layer profile---it derives the partition from routing statistics profiled on the deployed model---but its advantage depends on expert granularity, since on coarse-grained pools such as Phi-3.5-MoE (top-2 of 16) high-frequency and high-specificity experts largely coincide and simple frequency-based selection may already approximate it.
\rise also adapts only experts the frozen router already assigns to the target language, so it may be insufficient for languages with essentially no routing footprint or tasks whose difficulty lies in shared reasoning.
Finally, Theorems 1 and 2 assume approximately disjoint routing supports and should be read as explaining the observed trend rather than as worst-case guarantees, and our main fine-tuning results are single runs under a shared hyperparameter configuration without significance testing.

%% file: contents/7appendix.tex
\newpage
\appendix
\section*{Appendix}
\startcontents[appendix]
\printcontents[appendix]{}{1}{\setcounter{tocdepth}{2}}
\vspace{8pt}
\section{More Related Work}
\paragraph{Multilingual Language Models}
As large language models have gained increasing prominence, a growing body of research has begun to focus on both the mechanistic interpretability and practical applications of multilingual language models.
\citet{cho2025analyzing} applied sparse autoencoders to dissect multilingual processing in LLMs, revealing how individual features encode language-specific information.
On the data side, \citet{penedo2025fineweb2} introduced FineWeb2, a scalable pre-training data pipeline adapted to every language, enabling the development of high-quality multilingual corpora at scale.
For multilingual encoders, \citet{boizard2025eurobert} proposed EuroBERT, scaling multilingual encoder models across European languages, while \citet{yu2025arcticembed2} presented Arctic-Embed 2.0, achieving strong multilingual retrieval without sacrificing cross-lingual transfer.
Benchmarking efforts have also expanded in scope: \citet{kim2025oneruler} introduced a benchmark for multilingual long-context language models, and \citet{schmidt2025fleursslu} proposed Fleurs-SLU, a massively multilingual benchmark for spoken language understanding.
Beyond the multilingual axis, benchmark design has also targeted specific failure modes of large models: \citet{zheng2025reefknot} construct a benchmark for relation hallucination in multimodal LLMs, together with an evaluation protocol and an analysis of mitigation strategies.
On the evaluation and judgment side, \citet{pombal2025mprometheus} developed M-Prometheus, a suite of open multilingual LLM judges supporting diverse languages.
For safety, \citet{kumar2025polyguard} introduced PolyGuard, a multilingual moderation tool covering 17 languages.
Regarding translation, \citet{ramos2025multilingual} studied multilingual contextualization of LLMs for document-level machine translation.
Collectively, these works highlight the growing importance of multilingual considerations across all stages of language model development, providing important context for our analysis of multilingual routing behavior in MoE models.
\paragraph{Mixture-of-Experts (MoE)} Mixture-of-Experts (MoE) architectures have been widely studied as a means to scale neural networks while maintaining computational efficiency~\citep{shazeer2017outrageouslylargeneuralnetworks,gshard,switchtransformer}.
By activating only a subset of experts for each input token, MoE models can achieve high parameter counts without incurring proportional increases in inference cost.
This design has been particularly influential in the development of large language models (LLMs), where MoE variants have demonstrated strong performance across various natural language processing tasks~\citep{deepseekv2_2024,phi3_2024,qwen3}.
A key component of MoE models is the routing mechanism, which determines how input tokens are assigned to different experts.
Several routing strategies have been proposed, including top-k gating~\citep{shazeer2017outrageouslylargeneuralnetworks}, learned routing networks~\citep{gshard}, and dynamic routing based on input features~\citep{li-etal-2025-dynamic}.
Beyond standard routing, recent work has explored more flexible expert composition strategies: \citet{fleshman2025spectr} proposed SpectR, which dynamically composes experts using spectral routing to improve model expressivity, while \citet{li2025c3po} introduced C3PO, a test-time expert re-mixing approach that optimizes expert pathways at critical layers without retraining.
Complementary efforts have focused on efficiency and compression: \citet{li2025slimmoe} proposed SlimMoE to compress large MoE models via expert slimming and distillation, \citet{song2025blockffn} introduced BlockFFN with chunk-level activation sparsity to accelerate inference on end-side devices, and \citet{bertolissi2025localmixtures} showed that local mixtures of experts can be constructed via model merging to enable essentially free test-time training.

\section{The Algorithm of \rise}

\begin{algorithm}[htbp]
\caption{Language-specific Expert Subnetwork Selection}
\label{alg:expert_selection}
\begin{algorithmic}[1]
\REQUIRE Routing profiles $\{\mathbf{A}^{(l)}\}_{l=0}^{L-1}$, target language $\lambda^*$, budget $K$, layer boundaries $(L_1, L_2)$, allocation ratios $(\rho_s, \rho_m, \rho_d)$
\ENSURE Selected expert set $\mathcal{E}$

\STATE \textbf{Initialize:} $\mathcal{E} \leftarrow \emptyset$
\STATE \textbf{Compute budgets:} $K_s \leftarrow \lfloor K \cdot \rho_s \rfloor$, $K_m \leftarrow \lfloor K \cdot \rho_m \rfloor$, $K_d \leftarrow K - K_s - K_m$

\STATE \textcolor{gray}{\textit{// Phase 1: Select language-specific experts from shallow layers}}
\FOR{each layer $l \in \mathcal{L}_{\text{shallow}}$ and expert $i \in [N]$}
    \STATE Compute $\mathrm{Spec}(l, i, \lambda^*)$ using Eq.~\eqref{eq:score_overlap}
\ENDFOR
\STATE $\mathcal{E} \leftarrow \mathcal{E} \cup \mathrm{TopK}\left(\{(l,i) : l \in \mathcal{L}_{\text{shallow}}\}, K_s\right)$

\STATE \textcolor{gray}{\textit{// Phase 2: Select cross-lingual shared experts from middle layers}}
\FOR{each layer $l \in \mathcal{L}_{\text{middle}}$ and expert $i \in [N]$}
    \STATE Compute $\mathrm{Ovlp}(l, i)$ using Eq.~\eqref{eq:overlap}
\ENDFOR
\STATE $\mathcal{E} \leftarrow \mathcal{E} \cup \mathrm{TopK}\left(\{(l,i) : l \in \mathcal{L}_{\text{middle}}, (l,i) \notin \mathcal{E}\}, K_m\right)$

\STATE \textcolor{gray}{\textit{// Phase 3: Select language-specific experts from deep layers}}
\FOR{each layer $l \in \mathcal{L}_{\text{deep}}$ and expert $i \in [N]$}
    \STATE Compute $\mathrm{Spec}(l, i, \lambda^*)$ using Eq.~\eqref{eq:score_overlap}
\ENDFOR
\STATE $\mathcal{E} \leftarrow \mathcal{E} \cup \mathrm{TopK}\left(\{(l,i) : l \in \mathcal{L}_{\text{deep}}, (l,i) \notin \mathcal{E}\}, K_d\right)$

\RETURN $\mathcal{E}$
\end{algorithmic}
\end{algorithm}

\section{Gradient Isolation}
\label{gradient_isolation}

In this appendix, we formalize why routing isolation leads to gradient isolation, and why
training only the target-language expert subnetwork preserves the model's general
capabilities. The key point is that sparse MoE routing does not merely reduce computation:
it also induces a structured partition of the parameter space. Once the routing supports of
different languages are separated, gradient updates become correspondingly localized.

\paragraph{Setup.}
Consider an MoE transformer with $L$ MoE layers and $N_e$ experts per layer. For an input
sequence $x=(x_1,\dots,x_T)$, let $h_t^{(l)}$ denote the hidden state of token $t$ at layer $l$.
At MoE layer $l$, the output is
\begin{equation}
h_t^{(l+1)}
=
U^{(l)}\!\left(h_t^{(l)}\right)
+
\sum_{i=1}^{N_e}
g_{t,i}^{(l)}(x)\,
F_i^{(l)}\!\left(h_t^{(l)};\theta_{l,i}\right),
\label{eq:moe_forward_appendix}
\end{equation}
where:
(i) $U^{(l)}$ collects all frozen shared transformations at layer $l$ (e.g., residual pathway,
attention block, layer norm, and any non-expert shared modules),
(ii) $F_i^{(l)}(\cdot;\theta_{l,i})$ is expert $i$ at layer $l$ with parameters $\theta_{l,i}$,
and (iii) $g_{t,i}^{(l)}(x)\in\{0,1\}$ is the discrete routing indicator, with
$\sum_{i=1}^{N_e} g_{t,i}^{(l)}(x)=k$ under top-$k$ routing.

Let $S=\{S_l\}_{l=1}^L$ denote the selected expert subnetwork, where
$S_l\subseteq \{1,\dots,N_e\}$ is the set of trainable experts in layer $l$.
During RISE training, only $\theta_S=\{\theta_{l,i}: i\in S_l\}$ are updated; all other
parameters are frozen.

For a language $\lambda$, define its routing support at layer $l$ as
\begin{equation}
\begin{aligned}
\mathcal{R}_{\lambda}^{(l)}
&=
\left\{
i \in \{1,\dots,N_e\} :
\right.\\
&\quad\left.
\Pr_{x\sim \mathcal{D}_{\lambda},\,t}\!\big[g_{t,i}^{(l)}(x)=1\big] > 0
\right\}.
\end{aligned}
\label{eq:routing_support}
\end{equation}
Intuitively, $\mathcal{R}_{\lambda}^{(l)}$ contains the experts that language $\lambda$ actually uses
at layer $l$.

We also define the routing overlap mass between language $\lambda$ and the selected
subnetwork $S$:
\begin{equation}
\begin{aligned}
\Omega_{\lambda}(S)
&=
\sum_{l=1}^{L}\sum_{i\in S_l} p_{\lambda,i}^{(l)}, \\
p_{\lambda,i}^{(l)}
&=
\mathbb{E}_{x\sim \mathcal{D}_{\lambda},\,t}\big[g_{t,i}^{(l)}(x)\big].
\end{aligned}
\label{eq:overlap_mass}
\end{equation}
When routing isolation is strong, $\Omega_{\lambda}(S)$ is small for non-target languages.

\subsection{Gradient Isolation}

We first state the exact gradient form for expert parameters.

\begin{lemma}[Exact gradient isolation]
For any expert $i$ at layer $l$, the gradient of the training loss
$\mathcal{L}_{\lambda^*}$ on target-language data $\mathcal{D}_{\lambda^*}$ satisfies
\begin{equation}
\nabla_{\theta_{l,i}} \mathcal{L}_{\lambda^*}
=
\mathbb{E}_{x\sim \mathcal{D}_{\lambda^*}}
\left[
\sum_{t=1}^{T}
g_{t,i}^{(l)}(x)\,
\nabla_{\theta_{l,i}} \ell_t(x)
\right],
\label{eq:grad_isolation_expectation}
\end{equation}
where $\ell_t(x)$ is the token-level loss contribution.
Hence, if expert $i$ is never activated by target-language tokens, i.e.,
$g_{t,i}^{(l)}(x)=0$ for all $(x,t)$ from $\mathcal{D}_{\lambda^*}$, then
\begin{equation}
\nabla_{\theta_{l,i}} \mathcal{L}_{\lambda^*}=0.
\end{equation}
\end{lemma}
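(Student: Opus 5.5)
The plan is a direct chain-rule computation on the forward recursion in Eq.~\eqref{eq:moe_forward_appendix}. I will treat the discrete routing indicators $g_{t,i}^{(l)}(x)$ as fixed quantities during differentiation. The router is frozen during \rise training, and top-$k$ selection is piecewise constant in its inputs, so its derivative is zero almost everywhere and standard implementations apply a stop-gradient there. Under this convention, $\theta_{l,i}$ enters the computational graph of a sequence $x$ only through the terms $g_{t,i}^{(l)}(x)\,F_i^{(l)}(h_t^{(l)};\theta_{l,i})$, for positions $t=1,\dots,T$.

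\textbf{Step 1: pointwise identity.} First I establish the identity for a single sequence $x$. Introduce auxiliary copies $\theta_{l,i}^{(t)}$, one per position, each used only in the $t$-th invocation of expert $i$, and set them all equal to $\theta_{l,i}$. By the multivariate chain rule, $\nabla_{\theta_{l,i}}$ of the sequence loss $\sum_s \ell_s(x)$ equals $\sum_t \nabla_{\theta^{(t)}_{l,i}} \sum_s \ell_s(x)$. Since $\theta^{(t)}_{l,i}$ appears only multiplied by $g_{t,i}^{(l)}(x)$, each summand factors as
\[
g_{t,i}^{(l)}(x)\,\bigl(J_{\theta} F_i^{(l)}(h_t^{(l)};\theta_{l,i})\bigr)^{\top}\,\frac{\partial \sum_s \ell_s}{\partial h_t^{(l+1)}}.
\]
I interpret $\nabla_{\theta_{l,i}}\ell_t(x)$ in the lemma as exactly this position-$t$ contribution, namely the gradient of the loss propagated back through the expert call at position $t$. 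This is the step I expect to need the most care. Attention mixes positions, so the expert output at position $t$ influences later tokens' losses, not only $\ell_t$. The statement must therefore be read with $\ell_t$ denoting the loss routed through position $t$'s expert call, rather than literally the loss on token $t$. With this bookkeeping the factor $g_{t,i}^{(l)}$ multiplies every path, which is all the conclusion needs.

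\textbf{Step 2: exchange gradient and expectation.} Next I move from a single sequence to the full objective. For the finite training set $\mathcal{D}_{\lambda^*}$ the expectation is a finite average, so the gradient commutes with it by linearity. For a general distribution I would invoke dominated convergence, assuming integrable bounds on the per-sequence gradients. This yields Eq.~\eqref{eq:grad_isolation_expectation}.

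\textbf{Step 3: the corollary.} Finally, if $g_{t,i}^{(l)}(x)=0$ for every $(x,t)$ in the support of $\mathcal{D}_{\lambda^*}$, then every summand inside the expectation vanishes identically, and so $\nabla_{\theta_{l,i}}\mathcal{L}_{\lambda^*}=0$. I would also remark that the same argument holds if $g$ carries nonnegative softmax gate weights rather than binary indicators: the gate still multiplies the expert output and is zero exactly when the expert is unselected, so the factorization in Step 1 and the conclusion are unchanged.
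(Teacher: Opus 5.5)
Your proof is correct and rests on the same fact as the paper's: $\theta_{l,i}$ enters the forward pass only through $g_{t,i}^{(l)}(x)\,F_i^{(l)}(h_t^{(l)};\theta_{l,i})$. You decompose the gradient differently, though.

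The paper splits the gradient by loss terms and asserts $\nabla_{\theta_{l,i}}\ell_t=g_{t,i}^{(l)}\,\frac{\partial\ell_t}{\partial F_i^{(l)}}\frac{\partial F_i^{(l)}}{\partial\theta_{l,i}}$. That step relies on Eq.~\eqref{eq:moe_forward_appendix} being written token-wise, with $U^{(l)}$ a function of $h_t^{(l)}$ alone, so that $\ell_t$ sees $\theta_{l,i}$ only through position $t$. Yet the paper lists the attention block inside $U^{(l)}$. You instead split by expert call sites, using per-position parameter copies, and this stays exact when attention mixes positions.

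Your diagnosis of the statement is right. Suppose token $t$ has $g_{t,i}^{(l)}=0$ but attends to an earlier position $s$ with $g_{s,i}^{(l)}=1$. Then generically $\nabla_{\theta_{l,i}}\ell_t\neq 0$, and the displayed identity, with $\ell_t$ read literally as the loss on token $t$, drops this term. Your call-site reading restores the identity and coincides with the literal reading in the token-wise model, so your version strictly generalizes the paper's. The corollary holds under either reading, because a never-selected expert contributes nothing to the computation.

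The paper's route gives a one-line argument in an idealized model. Yours gives exactness for an actual transformer, plus an explicit justification of the gradient--expectation interchange.

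One sharpening: $g_{t,i}^{(l)}$ is computed from the layer-$l$ input, which lies upstream of expert $i$ at layer $l$. It is therefore genuinely independent of $\theta_{l,i}$, and the frozen router plays no role. The piecewise-constant (stop-gradient) convention is needed only for routing decisions at layers beyond $l$, which do depend on $\theta_{l,i}$ through later hidden states.
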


\begin{proof}
From the MoE forward definition in Eq.~\eqref{eq:moe_forward_appendix}, the parameter
$\theta_{l,i}$ appears only in the term
$g_{t,i}^{(l)}(x)F_i^{(l)}(h_t^{(l)};\theta_{l,i})$.
Since $g_{t,i}^{(l)}(x)\in\{0,1\}$ is a multiplicative routing mask, the chain rule gives
\begin{equation}
\nabla_{\theta_{l,i}} \ell_t(x)
=
g_{t,i}^{(l)}(x)\,
\frac{\partial \ell_t(x)}{\partial F_i^{(l)}}\,
\frac{\partial F_i^{(l)}(h_t^{(l)};\theta_{l,i})}{\partial \theta_{l,i}}.
\end{equation}
Summing over tokens and taking expectation over $x\sim \mathcal{D}_{\lambda^*}$ yields
Eq.~\eqref{eq:grad_isolation_expectation}. If $g_{t,i}^{(l)}(x)=0$ for all target-language
examples, every term vanishes identically, so the gradient is exactly zero.
\end{proof}

This lemma shows that sparse routing induces exact gradient sparsity: an expert receives
gradient only from the tokens that actually traverse it.

\subsection{Exact Preservation Under Disjoint Routing}

We now formalize the strongest case: if the selected target-language subnetwork is disjoint
from the routing support of another language, then training on the target language leaves
that other language \emph{exactly unchanged}.

\begin{theorem}[Exact invariance under disjoint routing]
\label{thm:exact_invariance}
Let $\lambda^*$ be the target language and let $S=\{S_l\}_{l=1}^{L}$ be the selected trainable
expert subnetwork. Assume:

\begin{enumerate}
    \item All shared parameters and all unselected experts are frozen.
    \item For a non-target language $\lambda$, the selected experts are disjoint from its
    routing supports at every layer:
    \begin{equation}
    S_l \cap \mathcal{R}_{\lambda}^{(l)} = \varnothing,
    \qquad \forall l=1,\dots,L.
    \label{eq:strict_disjointness}
    \end{equation}
\end{enumerate}

Then, after any number of gradient-based training steps on $\mathcal{D}_{\lambda^*}$ that
update only $\theta_S$, the model's forward computation on any input
$x\sim \mathcal{D}_{\lambda}$ remains exactly unchanged:
\begin{equation}
f_{\theta'}(x)=f_{\theta}(x),
\qquad \forall x\in \mathrm{supp}(\mathcal{D}_{\lambda}),
\label{eq:exact_same_logits}
\end{equation}
where $\theta$ denotes the initial parameters and $\theta'$ the parameters after training.
Consequently, the predictive distribution, token losses, and task outputs on language
$\lambda$ are all unchanged.
\end{theorem}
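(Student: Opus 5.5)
We argue by induction over layers for a fixed input $x\in\mathrm{supp}(\mathcal{D}_\lambda)$, comparing the forward pass under the initial parameters $\theta$ and the trained parameters $\theta'$. By assumption 1, $\theta'$ and $\theta$ agree on every coordinate except possibly $\theta_S=\{\theta_{l,i}: i\in S_l\}$. This is the only fact about training we need. The particular gradient steps, learning rate, number of steps and optimizer are irrelevant, because any update rule that touches only $\theta_S$ produces such a $\theta'$. Lemma~1 is therefore not logically required. It only explains why the updates stay inside $\theta_S$, which assumption 1 already stipulates.

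The induction hypothesis is that $h_t^{(l)}(x;\theta')=h_t^{(l)}(x;\theta)$ for all tokens $t$. We need this to hold at every position, since attention mixes tokens. The base case is the embedding layer, which is a frozen shared parameter. For the inductive step we use Eq.~\eqref{eq:moe_forward_appendix} and proceed in three parts.

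First, $U^{(l)}$ is frozen, and its input, the whole sequence of hidden states at layer $l$, is unchanged by hypothesis. So its output is unchanged.

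Second, the router is a frozen shared module applied to unchanged hidden states. Hence the top-$k$ indicators satisfy $g_{t,i}^{(l)}(x;\theta')=g_{t,i}^{(l)}(x;\theta)$ for all $i$. This step is where we must be careful. We need the router to be part of the frozen shared parameters, and the paper says the router is frozen. We also need $\mathcal{R}_\lambda^{(l)}$ to be evaluated at the \emph{initial} model. The induction itself shows that the routing is identical under $\theta'$, so there is no circularity.

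Third, we split the expert sum into $i\notin S_l$ and $i\in S_l$. For $i\notin S_l$, both $\theta_{l,i}$ and the input $h_t^{(l)}$ are unchanged, so $F_i^{(l)}$ is unchanged. For $i\in S_l$, the parameters may have changed, but $g_{t,i}^{(l)}(x)=0$ by the argument below, so the term $g_{t,i}^{(l)}F_i^{(l)}$ vanishes under both $\theta$ and $\theta'$. Summing the pieces gives $h_t^{(l+1)}(x;\theta')=h_t^{(l+1)}(x;\theta)$.

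The delicate point, and the main obstacle, is passing from the support condition Eq.~\eqref{eq:strict_disjointness} to the pointwise statement "$g_{t,i}^{(l)}(x)=0$ for every $x\in\mathrm{supp}(\mathcal{D}_\lambda)$ and every $t$." By Eq.~\eqref{eq:routing_support}, $i\notin\mathcal{R}_\lambda^{(l)}$ means $\Pr[g_{t,i}^{(l)}=1]=0$. This gives vanishing only almost surely, not on the entire topological support. We would handle it in one of two ways. One option is to interpret $\mathrm{supp}(\mathcal{D}_\lambda)$ as the set of inputs of positive probability, which is natural for a discrete distribution over token sequences. Then probability zero implies that the indicator is zero for each such $x$ and each position $t$, provided the probability over $t$ is taken uniformly over positions so that every position carries positive mass. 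The other option is to state the conclusion $\mathcal{D}_\lambda$-almost surely. We will adopt the discrete interpretation and say so explicitly.

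After the final layer, the hidden states are equal and the output head is frozen, so $f_{\theta'}(x)=f_\theta(x)$, which is Eq.~\eqref{eq:exact_same_logits}. Equality of the predictive distributions, token losses and decoded outputs then follows immediately. For autoregressive generation, we apply the same argument to each generated prefix, assuming those prefixes also lie in the support. Otherwise we restrict the claim to teacher-forced evaluation.
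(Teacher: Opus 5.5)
Your proof is correct and is essentially the paper's argument: the same layer-by-layer induction, in which the frozen embeddings give the base case, the frozen router applied to unchanged hidden states reproduces the routing decisions, and disjointness forces every expert actually used by $\lambda$ to be frozen. Your observation that $\theta'$ differs from $\theta$ only on $\theta_S$ (with $\mathcal{R}_\lambda^{(l)}$ fixed at the initial model) makes the paper's separate induction over optimization steps unnecessary, and you also tighten points the paper leaves implicit: the passage from $\Pr[g_{t,i}^{(l)}=1]=0$ to pointwise vanishing on positive-probability inputs, and the treatment of autoregressive prefixes.
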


\begin{proof}
We prove the claim by induction over training steps and model layers.

\paragraph{Step 1: one update cannot alter the forward path of language $\lambda$.}
Consider one optimization step that updates only $\theta_S$:
\begin{equation}
\theta'_{l,i}
=
\begin{cases}
\theta_{l,i} - \eta \nabla_{\theta_{l,i}}\mathcal{L}_{\lambda^*}, & i\in S_l,\\
\theta_{l,i}, & i\notin S_l.
\end{cases}
\end{equation}
Take any input $x\sim \mathcal{D}_{\lambda}$.
We show by induction on layer depth that all hidden states remain unchanged.

For the first layer, the input embeddings are unchanged because they are frozen, so
$h_t^{(0)\prime}=h_t^{(0)}$.

Assume $h_t^{(l)\prime}=h_t^{(l)}$ for all tokens $t$ at layer $l$.
Since the router and all shared modules are frozen, the router receives the same input hidden
states before and after the update; therefore the routing decisions are identical:
\begin{equation}
g_{t,i}^{(l)\prime}(x)=g_{t,i}^{(l)}(x), \qquad \forall i,t.
\end{equation}
By the disjointness assumption in Eq.~\eqref{eq:strict_disjointness}, if
$g_{t,i}^{(l)}(x)=1$, then necessarily $i\notin S_l$, because no selected expert belongs to
the routing support of language $\lambda$. Hence every expert actually activated by
language $\lambda$ at layer $l$ remains frozen:
\begin{equation}
\begin{aligned}
\theta'_{l,i}&=\theta_{l,i} 
&\text{for all } i \text{ such that } g_{t,i}^{(l)}(x)=1.
\end{aligned}
\end{equation}
Therefore, every active expert output is unchanged, and so is the shared branch:
\begin{align}
h_t^{(l+1)\prime}
&=
U^{(l)}(h_t^{(l)\prime})
+
\sum_{i=1}^{N_e}
g_{t,i}^{(l)\prime}(x)\,
F_i^{(l)}(h_t^{(l)\prime};\theta'_{l,i})\\
&=
U^{(l)}(h_t^{(l)})
+
\sum_{i=1}^{N_e}
g_{t,i}^{(l)}(x)\,
F_i^{(l)}(h_t^{(l)};\theta_{l,i})\\
&=
h_t^{(l+1)}.
\end{align}
Thus the induction closes, giving $h_t^{(l)\prime}=h_t^{(l)}$ for all layers $l$ and tokens $t$,
and therefore $f_{\theta'}(x)=f_\theta(x)$.

\paragraph{Step 2: extension to multiple updates.}
The argument above applies after each gradient step individually. Since at every step only
selected experts are modified, and these experts are never traversed by language $\lambda$,
the forward computation for $\lambda$ remains unchanged after every step. By induction over
optimization steps, Eq.~\eqref{eq:exact_same_logits} holds after arbitrary-length training.
\end{proof}

Theorem~\ref{thm:exact_invariance} gives the cleanest formal statement of the intuition:
if a language never uses the updated experts, then it is mathematically impossible for its
forward pass to change.

\subsection{Approximate Preservation Under Near-Orthogonal Routing}

In practice, routing isolation is strong but not perfectly disjoint. We therefore derive a
stability bound showing that cross-lingual interference is controlled by routing overlap.

\begin{assumption}[Expert smoothness]
For each expert $F_i^{(l)}$, there exists a constant $L_{l,i}>0$ such that for all relevant
hidden states $h$ and parameter perturbations $\Delta\theta_{l,i}$,
\begin{equation}
\begin{aligned}
\left\|
F_i^{(l)}(h;\theta_{l,i}+\Delta\theta_{l,i})
-
\right.\\
&\quad\left.
F_i^{(l)}(h;\theta_{l,i})
\right\|
\\
&\le
L_{l,i}\,\|\Delta\theta_{l,i}\|.
\end{aligned}
\label{eq:expert_lipschitz}
\end{equation}
\end{assumption}

\begin{assumption}[Task-head smoothness]
The mapping from final hidden states to logits/loss is $C$-Lipschitz.
\end{assumption}

\begin{theorem}[Cross-lingual perturbation bound]
\label{thm:approx_bound}
Let $\Delta\theta$ be any update supported only on the selected subnetwork $S$, i.e.,
$\Delta\theta_{l,i}=0$ for $i\notin S_l$.
Assume the routing pattern of a non-target language $\lambda$ is locally stable under this
small update. Then the expected output perturbation on $\mathcal{D}_{\lambda}$ is bounded by
\begin{multline}
\mathbb{E}_{x\sim\mathcal{D}_{\lambda}}
\big[
\|f_{\theta+\Delta\theta}(x)-f_{\theta}(x)\|
\big] \\
\le
C
\sum_{l=1}^{L}\sum_{i\in S_l}
p_{\lambda,i}^{(l)}\,L_{l,i}\,\|\Delta\theta_{l,i}\|.
\label{eq:output_perturbation_bound}
\end{multline}
Consequently, the expected loss change satisfies
\begin{multline}
\mathbb{E}_{x\sim\mathcal{D}_{\lambda}}
\big[
|\ell(x;\theta+\Delta\theta)-\ell(x;\theta)|
\big] \\
\le
C'
\sum_{l=1}^{L}\sum_{i\in S_l}
p_{\lambda,i}^{(l)}\,L_{l,i}\,\|\Delta\theta_{l,i}\|
\label{eq:loss_perturbation_bound}
\end{multline}
for some constant $C'>0$.
\end{theorem}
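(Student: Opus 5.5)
The approach is a layer-by-layer telescoping argument that uses the same forward recursion, Eq.~\eqref{eq:moe_forward_appendix}, as in Theorem~\ref{thm:exact_invariance}. The difference is that we now track a nonzero discrepancy instead of an exact equality. By the local stability hypothesis, the routing indicators for $x\sim\mathcal{D}_\lambda$ are the same under $\theta$ and $\theta+\Delta\theta$, so we may write $g^{(l)\prime}_{t,i}(x)=g^{(l)}_{t,i}(x)$ throughout. Define the per-layer discrepancy $\delta_t^{(l)}=\|h_t^{(l)\prime}-h_t^{(l)}\|$, with $\delta_t^{(0)}=0$ because the embeddings are frozen.

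\textbf{One layer.} Subtract the two forward equations and add and subtract $F_i^{(l)}(h_t^{(l)\prime};\theta_{l,i})$. This splits the change into a \emph{parameter term}, $\sum_{i\in S_l} g^{(l)}_{t,i}\bigl(F_i^{(l)}(h';\theta+\Delta\theta)-F_i^{(l)}(h';\theta)\bigr)$, and a \emph{propagation term} coming from $h'\neq h$ in the frozen branch $U^{(l)}$ and in all active experts. By the expert smoothness assumption, Eq.~\eqref{eq:expert_lipschitz}, the parameter term is at most $\sum_{i\in S_l} g^{(l)}_{t,i}L_{l,i}\|\Delta\theta_{l,i}\|$; only selected experts contribute, since $\Delta\theta_{l,i}=0$ for $i\notin S_l$.

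\textbf{Unrolling.} Unrolling the recursion, the final-layer discrepancy is a sum over $l$ of injected perturbations. Apply the task-head $C$-Lipschitz assumption to the final hidden state, then take expectations over $x\sim\mathcal{D}_\lambda$ and over the token. Since $\mathbb{E}[g^{(l)}_{t,i}]=p^{(l)}_{\lambda,i}$ by Eq.~\eqref{eq:overlap_mass} and the bounds $L_{l,i}\|\Delta\theta_{l,i}\|$ are deterministic, linearity of expectation gives exactly the right-hand side of Eq.~\eqref{eq:output_perturbation_bound}. The loss bound, Eq.~\eqref{eq:loss_perturbation_bound}, then follows by composing with the Lipschitz map from logits to loss; the cross-entropy over bounded logits is Lipschitz, which absorbs into $C'$.

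\textbf{Main obstacle.} The hard step is the propagation term. Without further structure, the downstream frozen layers could amplify an early perturbation. A naive bound would multiply the result by a product of per-layer Lipschitz constants of $h\mapsto U^{(l)}(h)+\sum_i g_{t,i}F_i^{(l)}(h)$, and would also need Lipschitzness of the experts in $h$, which the assumptions do not provide.

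I would handle this by reading ``the mapping from final hidden states to logits/loss is $C$-Lipschitz'' as covering the frozen downstream computation. More precisely, I would take $C$ to be a uniform Lipschitz constant of the map from the injected perturbation at any MoE layer to the output, with routing held fixed, which the local-stability assumption permits. I would state this interpretation explicitly. Under it, each layer's injected perturbation reaches the output with gain at most $C$, and the propagation terms are absorbed.

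If a more self-contained version is preferred, the fallback is to assume Lipschitz constants $\beta_l$ of each frozen layer map in $h$ and define $C$ as $C_{\text{head}}\cdot\max_l\prod_{m>l}\beta_m$. Either way the structural conclusion is unchanged: interference scales with the overlap-weighted update mass, and it vanishes when $p^{(l)}_{\lambda,i}=0$ on $S$, recovering Theorem~\ref{thm:exact_invariance}.
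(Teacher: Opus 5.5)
Your proposal is correct and follows essentially the same route as the paper. Both arguments use a per-layer triangle inequality with the expert parameter-smoothness bound on the selected experts, sum over layers, propagate to the output with a single constant $C$, and finish with $\mathbb{E}[g_{t,i}^{(l)}]=p_{\lambda,i}^{(l)}$ and Lipschitzness of the loss in the logits. Your explicit handling of the propagation term, reading $C$ as the Lipschitz constant of the remaining frozen network under fixed routing, is exactly what the paper's proof relies on implicitly when it bounds each layer's change at a common input and then invokes ``the $C$-Lipschitz property of the remaining frozen network'' (stronger than the stated task-head assumption); so you make an informal step precise rather than depart from the paper.
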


\begin{proof}
Fix $x\sim\mathcal{D}_{\lambda}$ and condition on its routing pattern.
Because only experts in $S$ are updated, the change in the MoE contribution at layer $l$
can only come from activated experts in $S_l$. Using Eq.~\eqref{eq:moe_forward_appendix},
define the per-expert perturbation
\[
\begin{aligned}
\Delta F_{t,i}^{(l)}
&\mathrel{:=}
F_i^{(l)}(h_t^{(l)};\theta_{l,i}+\Delta\theta_{l,i}) \\
&\quad -
F_i^{(l)}(h_t^{(l)};\theta_{l,i}) .
\end{aligned}
\]
The triangle inequality then gives
\begin{equation}
\begin{aligned}
\|h_t^{(l+1)\prime}-h_t^{(l+1)}\|
&\le
\sum_{i\in S_l}
g_{t,i}^{(l)}(x)\,
\bigl\|\Delta F_{t,i}^{(l)}\bigr\|.
\end{aligned}
\end{equation}
Applying the smoothness assumption in Eq.~\eqref{eq:expert_lipschitz} gives
\begin{equation}
\begin{aligned}
\|h_t^{(l+1)\prime}-h_t^{(l+1)}\|
&\le
\sum_{i\in S_l}
g_{t,i}^{(l)}(x)\,L_{l,i}\,\|\Delta\theta_{l,i}\|.
\end{aligned}
\end{equation}
Summing across layers and propagating to the output using the $C$-Lipschitz property of the
remaining frozen network yields
\begin{multline}
\|f_{\theta+\Delta\theta}(x)-f_{\theta}(x)\|
\\
\le
C
\sum_{l=1}^{L}\sum_{i\in S_l}
g_{t,i}^{(l)}(x)\,L_{l,i}\,\|\Delta\theta_{l,i}\|.
\end{multline}
Taking expectation over $x\sim \mathcal{D}_{\lambda}$ and using
$\mathbb{E}[g_{t,i}^{(l)}(x)] = p_{\lambda,i}^{(l)}$ gives
Eq.~\eqref{eq:output_perturbation_bound}. The loss bound in
Eq.~\eqref{eq:loss_perturbation_bound} follows immediately from the Lipschitz continuity of
the loss with respect to logits.
\end{proof}

Theorem~\ref{thm:approx_bound} shows that cross-lingual interference is not arbitrary: it is
proportional to the extent to which a non-target language routes through the updated
experts. Therefore, when routing isolation makes $p_{\lambda,i}^{(l)}$ very small for
$i\in S_l$, the induced perturbation on language $\lambda$ is correspondingly small.

\subsection{Implication for RISE}

The above results justify the design principle of RISE. The method first identifies a
target-language expert subnetwork using routing statistics, and then updates only that
subnetwork while freezing all remaining parameters. If the selected experts align with the
target language's routing support and have little overlap with other languages, then:

\begin{enumerate}
    \item gradients from target-language training are concentrated on the selected experts;
    \item non-target languages do not backpropagate through those experts, or do so only
    with very small probability;
    \item thus, non-target behavior is either exactly preserved (in the disjoint case) or
    perturbed only by a small amount bounded by routing overlap.
\end{enumerate}

In this sense, routing isolation induces \emph{functional decoupling} between language-specific
expert subnetworks, while gradient isolation turns this decoupling into \emph{optimization
locality}. This explains why selective expert training can improve the target language
without materially degrading the model's general multilingual capabilities.

\section{Cross-Task Transfer: MGSM Results After TyDiQA Training}
\label{app:mgsm_crosstask_transfer}
\begin{table*}[htbp]
\centering
\small
\setlength{\tabcolsep}{4.2pt}
\begin{tabular}{l
                S[table-format=2.1] S[table-format=2.1] S[table-format=2.1]
                S[table-format=2.1] S[table-format=2.1] S[table-format=2.1]
                S[table-format=2.1] S[table-format=2.1] S[table-format=2.1]
                S[table-format=2.1] S[table-format=2.1]}
\toprule
\rowcolor{gray!20}
\textbf{Setting} &
\textbf{BN} & \textbf{DE} & \textbf{EN} & \textbf{ES} & \textbf{FR} & \textbf{JA} & \textbf{RU} & \textbf{SW} & \textbf{TH} & \textbf{ZH} & \textbf{Avg.} \\
\midrule
\multicolumn{12}{l}{\textbf{Qwen3-30B-A3B (MGSM, Accuracy \%)}} \\
Vanilla         & 46.0 & 88.5 & 96.5 & 91.5 & 82.0 & 83.5 & 92.5 & 48.0 & 87.5 & 86.0 & 80.2 \\
\rowcolor{gray!10}
Random (64,\textcolor{tgtBN}{BN})  & \textbf{\textcolor{tgtBN}{46.0}} & 87.5 & 96.0 & 90.5 & 81.5 & 84.5 & 91.0 & 46.5 & 87.0 & 89.0 & 80.0 \\
Random (128,\textcolor{tgtBN}{BN}) & \textbf{\textcolor{tgtBN}{49.0}} & 87.5 & 96.0 & 90.0 & 81.5 & 83.0 & 91.0 & 46.0 & 87.0 & 87.0 & 79.8 \\
\rowcolor{gray!10}
LoRA~\citep{hu2022lora}           & \textbf{\textcolor{tgtBN}{65.0}} & 88.0 & 97.0 & 89.5 & 79.0 & 82.0 & 90.0 & 41.0 & 87.5 & 88.5 & 80.8 \\
ESFT ~\citep{esft}  & \textbf{\textcolor{tgtBN}{48.0}} & 86.5 & 97.0 & 91.0 & 82.0 & 84.0 & 92.5 & 47.5 & 87.0 & 86.0 & 80.2 \\
\midrule
\rowcolor{gray!10}
\rise~(128, \textcolor{tgtBN}{BN})  & \textbf{\textcolor{tgtBN}{55.0}} & 87.5 & 96.5 & 89.5 & 82.5 & 84.5 & 91.0 & 46.5 & 86.0 & 87.5 & 80.7 \\
\rise~(128, \textcolor{tgtRU}{RU})  & 49.0 & 88.5 & 96.0 & 91.5 & 81.0 & 85.0 & \textbf{\textcolor{tgtRU}{91.0}} & 43.0 & 87.5 & 89.0 & 80.2 \\
\rowcolor{gray!10}
\rise~(128, \textcolor{tgtID}{ID})  & 46.5 & 88.5 & 97.5 & 90.5 & 82.0 & 84.5 & 92.0 & 47.0 & 88.0 & 87.0 & 80.4 \\
\midrule\midrule
\multicolumn{12}{l}{\textbf{Phi-3.5-MoE-Instruct (vLLM + chat template, MGSM, Accuracy \%)}} \\
Vanilla         & 1.0  & 79.0 & 88.5 & 81.5 & 73.0 & 56.0 & 77.0 & 1.0  & 26.5 & 65.5 & 54.9 \\
\rowcolor{gray!10}
Random (16,\textcolor{tgtBN}{BN})  & \textbf{\textcolor{tgtBN}{3.0}}  & 79.5 & 89.5 & 83.0 & 79.0 & 60.0 & 82.0 & 1.5  & 25.0 & 65.5 & 56.8 \\
Random (32,\textcolor{tgtBN}{BN})  & \textbf{\textcolor{tgtBN}{3.0}}  & 78.5 & 88.0 & 83.5 & 79.5 & 57.0 & 78.5 & 1.5  & 22.5 & 64.5 & 55.7 \\
\rowcolor{gray!10}
TopK (16,\textcolor{tgtBN}{BN})    & \textbf{\textcolor{tgtBN}{1.5}}  & 78.5 & 89.0 & 82.0 & 74.0 & 59.5 & 79.0 & 2.5  & 24.5 & 67.5 & 55.8 \\
LoRA~\citep{hu2022lora} & \textbf{\textcolor{tgtBN}{0.0}} & 82.0 & 91.5 & 80.5 & 73.5 & 60.5 & 78.5 & 0.5 & 10.0 & 64.0 & 54.1 \\
\rowcolor{gray!10}
ESFT ~\citep{esft}   & \textbf{\textcolor{tgtBN}{3.5}}  & 81.0 & 88.5 & 82.5 & 78.0 & 56.5 & 80.5 & 1.5  & 22.5 & 66.0 & 56.1 \\
\midrule
\rise~(16, \textcolor{tgtBN}{BN})   & \textbf{\textcolor{tgtBN}{3.5}}  & 76.5 & 88.5 & 84.0 & 75.5 & 62.5 & 81.5 & 0.5  & 22.0 & 64.0 & 55.9 \\
\rowcolor{gray!10}
\rise~(16, \textcolor{tgtRU}{RU})   & 2.5  & 79.5 & 91.0 & 83.5 & 75.5 & 58.0 & \textbf{\textcolor{tgtRU}{79.0}} & 0.0  & 19.5 & 66.0 & 55.5 \\
\rise~(16, \textcolor{tgtID}{ID})   & 0.5  & 80.5 & 88.5 & 83.5 & 76.5 & 58.5 & 78.0 & 1.5  & 19.5 & 63.5 & 55.1 \\
\bottomrule
\end{tabular}
\vspace{2pt}
\caption{\textbf{Cross-task transfer: MGSM accuracy (\%) for models trained on TyDiQA.} All models are trained exclusively on TyDiQA and evaluated zero-shot on MGSM to measure cross-task interference. The table is split by backbone: \texttt{Qwen3-30B-A3B} and \texttt{Phi-3.5-MoE-Instruct}. The \textcolor{tgtBN}{\textbf{colored bold}} value indicates the target language column (\textcolor{tgtBN}{blue}$=$BN, \textcolor{tgtRU}{orange}$=$RU, \textcolor{tgtID}{green}$=$ID); the language code in the first column is colored accordingly. }
\label{tab:mgsm_bn_tydiqa_transfer}
\end{table*}

To further validate that \rise does not impair the model's capabilities on tasks beyond the training domain, we evaluate all TyDiQA-trained models on the MGSM benchmark without any MGSM-specific training.
This cross-task evaluation directly tests whether the language-specific subnetworks identified by \rise are truly isolated: training on extractive QA (TyDiQA) should not degrade multilingual mathematical reasoning (MGSM) if the selected experts are genuinely language-specific rather than task-general.
As shown in Table~\ref{tab:mgsm_bn_tydiqa_transfer}, \rise consistently preserves MGSM performance across all non-target languages, confirming that our subnetwork selection respects the functional boundaries between language-specific and task-general computation.

\section{Additional Sensitivity Analysis}
\label{app:sensitivity_analysis}

Figure~\ref{fig:ablation_budget_ratio} complements the main-text sensitivity analysis by varying the layer-group budget allocation ratio across shallow, middle, and deep experts.
The best setting concentrates the budget on shallow and deep layers, while the uniform allocation underperforms, matching the layer-wise ablation in Figure~\ref{fig:ablation_layer_combo}.

\begin{figure}[H]
\centering
\includegraphics[width=\linewidth,trim=5 15 5 5,clip]{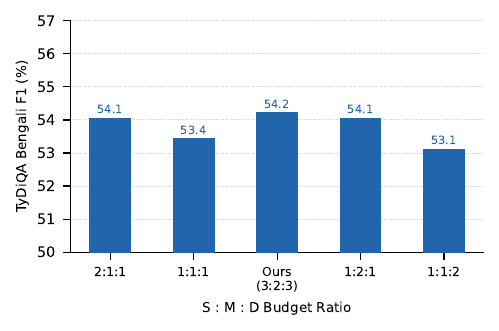}
\caption{Ablation on expert budget allocation ratio across shallow, middle, and deep layers.}
\label{fig:ablation_budget_ratio}
\end{figure}

\section{Additional Routing Analysis}
\label{app:additional_routing_analysis}

This section consolidates the complete layer-wise expert overlap statistics across both backbone models and both evaluation benchmarks, extending the analysis summarized in Table~\ref{tab:layerwise_overlap} of the main text.

\subsection{Calibrating Overlap Magnitudes}
\label{app:overlap_calibration}

\begin{table}[t]
\centering
\small
\setlength{\tabcolsep}{3pt}
\begin{tabular}{@{}lc@{}}
\toprule
\rowcolor{gray!20}
\textbf{Reference point} & \textbf{Jaccard} \\
\midrule
Split-half self-overlap, same language & 0.95 \\
\rowcolor{gray!10}
Same language, MGSM vs.\ TyDiQA (SW / BN) & 0.77 / 0.67 \\
Closest language pair in set (ES--FR) & 0.741 \\
\rowcolor{gray!10}
High-resource pair (EN--ZH) & 0.243 \\
Low-resource pairs, mean & 0.297 \\
\rowcolor{gray!10}
High- vs.\ low-resource pairs, mean & 0.181 \\
Random top-$K$ floor & 0.011 \\
\bottomrule
\end{tabular}
\caption{\textbf{Reference points for interpreting global routing overlap}, computed on \texttt{Qwen3-30B-A3B} over MGSM at $K=128$.}
\label{tab:overlap_calibration}
\end{table}

A Jaccard value is only interpretable relative to what the metric can attain in a given setting, so we report a set of reference points in Table~\ref{tab:overlap_calibration}, all computed on \texttt{Qwen3-30B-A3B} over MGSM at $K=128$.
The upper end is given by split-half self-overlap, where a single language is profiled on two disjoint halves of its own data, reaching $0.95$; profiling the same language on two different benchmarks reaches $0.77$ (Swahili) and $0.67$ (Bengali), indicating that domain shift alone accounts for a substantial part of the gap from unity.
The closest language pair in our set, ES--FR, reaches $0.741$, and the two high-resource languages (EN--ZH) reach $0.243$ under the same protocol.
Against these reference points, low-resource languages overlap with one another at a mean of $0.297$ and with high-resource languages at $0.181$, while a random top-$K$ baseline gives only $0.011$.
Shared routing among low-resource languages is therefore far from accidental---roughly $27\times$ the random floor---yet well short of the near-identical support exhibited by typologically close pairs.
\begin{table}[t]
\centering
\small
\setlength{\tabcolsep}{4pt}
\begin{tabular}{lcccc}
\toprule
\rowcolor{gray!20}
\textbf{Language} & \textbf{Shal.} & \textbf{Mid.} & \textbf{Deep} & \textbf{Avg.} \\
\midrule
Telugu (TE)     & 0.16 & 0.29 & 0.06 & 0.17 \\
\rowcolor{gray!10}
Swahili (SW)    & 0.14 & 0.30 & 0.07 & 0.17 \\
Bengali (BN)    & 0.19 & 0.31 & 0.10 & 0.20 \\
\rowcolor{gray!10}
Korean (KO)     & 0.27 & 0.39 & 0.12 & 0.26 \\
Finnish (FI)    & 0.29 & 0.41 & 0.14 & 0.28 \\
\rowcolor{gray!10}
Arabic (AR)     & 0.31 & 0.42 & 0.21 & 0.31 \\
Indonesian (ID) & 0.32 & 0.46 & 0.19 & 0.32 \\
\rowcolor{gray!10}
Russian (RU)    & 0.40 & 0.42 & 0.21 & 0.34 \\
\midrule
\textbf{Mean}   & \textbf{0.26} & \textbf{0.38} & \textbf{0.14} & \textbf{0.26} \\
\bottomrule
\end{tabular}
\caption{Layer-wise expert overlap with English on TyDiQA (Qwen3-30B-A3B).}
\label{tab:tydiqa_layerwise_overlap}
\end{table}

\begin{table}[t]
\centering
\small
\setlength{\tabcolsep}{4pt}
\begin{tabular}{lcccc}
\toprule
\rowcolor{gray!20}
\textbf{Language} & \textbf{Shal.} & \textbf{Mid.} & \textbf{Deep} & \textbf{Avg.} \\
\midrule
Telugu (TE)     & 0.14 & 0.12 & 0.06 & 0.11 \\
\rowcolor{gray!10}
Bengali (BN)    & 0.24 & 0.16 & 0.00 & 0.13 \\
Arabic (AR)     & 0.19 & 0.23 & 0.00 & 0.14 \\
\rowcolor{gray!10}
Swahili (SW)    & 0.19 & 0.19 & 0.11 & 0.16 \\
Finnish (FI)    & 0.24 & 0.23 & 0.06 & 0.17 \\
\rowcolor{gray!10}
Korean (KO)     & 0.24 & 0.19 & 0.11 & 0.18 \\
Indonesian (ID) & 0.29 & 0.26 & 0.06 & 0.20 \\
\rowcolor{gray!10}
Russian (RU)    & 0.33 & 0.39 & 0.00 & 0.24 \\
\midrule
\textbf{Mean}   & \textbf{0.23} & \textbf{0.22} & \textbf{0.05} & \textbf{0.17} \\
\bottomrule
\end{tabular}
\caption{Layer-wise expert overlap with English on TyDiQA (Phi-3.5-MoE-Instruct).}
\label{tab:phi_tydiqa_layerwise_overlap}
\end{table}

\begin{table}[t]
\centering
\small
\setlength{\tabcolsep}{4pt}
\begin{tabular}{lcccc}
\toprule
\rowcolor{gray!20}
\textbf{Language} & \textbf{Shal.} & \textbf{Mid.} & \textbf{Deep} & \textbf{Avg.} \\
\midrule
Bengali (BN)  & 0.12 & 0.22 & 0.05 & 0.13 \\
\rowcolor{gray!10}
Swahili (SW)  & 0.04 & 0.19 & 0.04 & 0.09 \\
Thai (TH)     & 0.25 & 0.33 & 0.12 & 0.23 \\
\rowcolor{gray!10}
Chinese (ZH)  & 0.36 & 0.43 & 0.11 & 0.30 \\
Japanese (JA) & 0.37 & 0.37 & 0.10 & 0.28 \\
\rowcolor{gray!10}
Russian (RU)  & 0.41 & 0.43 & 0.17 & 0.34 \\
German (DE)   & 0.39 & 0.46 & 0.22 & 0.36 \\
\rowcolor{gray!10}
French (FR)   & 0.46 & 0.49 & 0.24 & 0.40 \\
Spanish (ES)  & 0.52 & 0.51 & 0.25 & 0.43 \\
\midrule
\textbf{Mean} & \textbf{0.32} & \textbf{0.38} & \textbf{0.14} & \textbf{0.28} \\
\bottomrule
\end{tabular}
\caption{Layer-wise expert overlap with English on MGSM (Qwen3-30B-A3B).}
\label{tab:mgsm_layerwise_overlap}
\end{table}

\begin{table}[t]
\centering
\small
\setlength{\tabcolsep}{4pt}
\begin{tabular}{lcccc}
\toprule
\rowcolor{gray!20}
\textbf{Language} & \textbf{Shal.} & \textbf{Mid.} & \textbf{Deep} & \textbf{Avg.} \\
\midrule
Swahili (SW)  & 0.10 & 0.04 & 0.00 & 0.04 \\
\rowcolor{gray!10}
Thai (TH)     & 0.05 & 0.12 & 0.00 & 0.06 \\
Bengali (BN)  & 0.10 & 0.09 & 0.06 & 0.08 \\
\rowcolor{gray!10}
Japanese (JA) & 0.19 & 0.30 & 0.22 & 0.24 \\
Russian (RU)  & 0.29 & 0.40 & 0.11 & 0.27 \\
\rowcolor{gray!10}
German (DE)   & 0.24 & 0.49 & 0.11 & 0.28 \\
French (FR)   & 0.33 & 0.49 & 0.06 & 0.29 \\
\rowcolor{gray!10}
Chinese (ZH)  & 0.05 & 0.51 & 0.39 & 0.32 \\
Spanish (ES)  & 0.38 & 0.46 & 0.11 & 0.32 \\
\midrule
\textbf{Mean} & \textbf{0.19} & \textbf{0.32} & \textbf{0.12} & \textbf{0.21} \\
\bottomrule
\end{tabular}
\caption{Layer-wise expert overlap with English on MGSM (Phi-3.5-MoE-Instruct).}
\label{tab:phi_mgsm_layerwise_overlap}
\end{table}

\paragraph{Qwen3-30B-A3B.}
Tables~\ref{tab:tydiqa_layerwise_overlap} and~\ref{tab:mgsm_layerwise_overlap} report per-language Jaccard similarity with English at the shallow, middle, and deep layers for \texttt{Qwen3-30B-A3B} on TyDiQA and MGSM, respectively.
Besides, we also provide the global-level TyDiQA overlap heatmap in Figure~\ref{fig:qwen_tydiqa_global_overlap}, which supplements Figure~\ref{fig:global_routing} in the main text.

\paragraph{Phi-3.5-MoE-Instruct.}
Tables~\ref{tab:phi_tydiqa_layerwise_overlap} and~\ref{tab:phi_mgsm_layerwise_overlap} report the layer-wise expert overlap statistics for Phi-3.5-MoE-Instruct on TyDiQA and MGSM, respectively.
Figures~\ref{fig:phi_mgsm_global_overlap} and~\ref{fig:phi_tydiqa_global_overlap} provide the corresponding global-level routing overlap analysis, serving as the Phi-side counterparts to Figure~\ref{fig:global_routing}~(a) in the main text.

Despite the substantial architectural differences from Qwen3-30B-A3B (32 vs.\ 48 layers, top-2 vs.\ top-8 routing, 16 vs.\ 128 experts per layer), both the \textit{routing isolation} and \textit{layerwise divergence} phenomena observed in the main text hold consistently: low-resource languages activate a narrow, repetitive set of experts with little overlap at the global level, while the layer-wise similarity follows the same three-phase hierarchy---shallow layers showing moderate overlap, middle layers peaking, and deep layers dropping to near zero.
These results confirm that the two core phenomena identified in our analysis are robust across model families and evaluation tasks, rather than artifacts of a specific architecture or benchmark.

\begin{figure}[htbp]
\centering
\includegraphics[width=\linewidth]{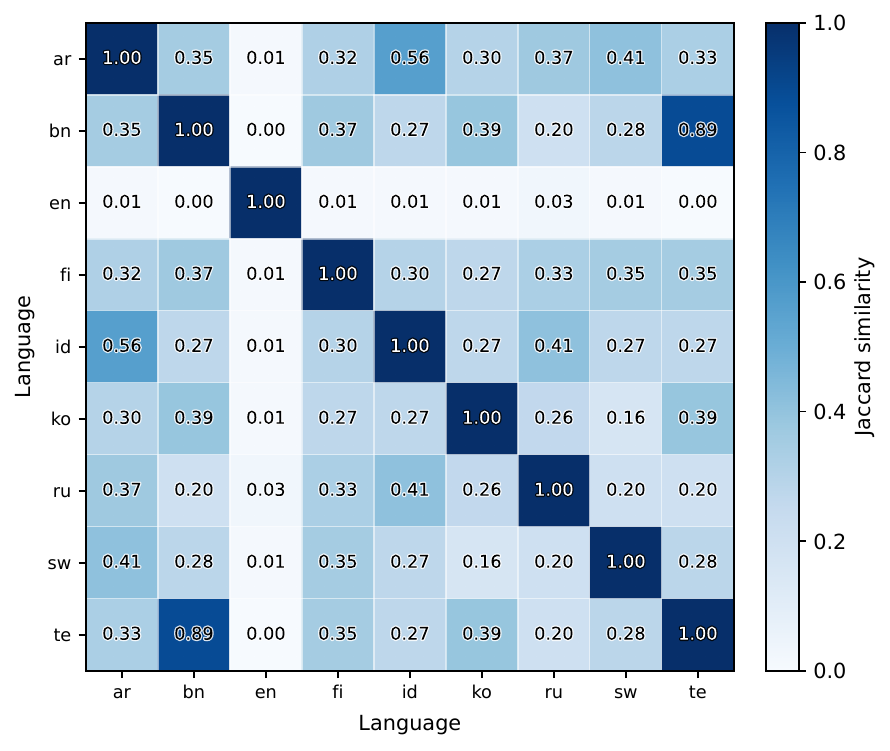}
\caption{Global-level expert activation overlap of Qwen3-30B-A3B across languages in TyDiQA.}
\label{fig:qwen_tydiqa_global_overlap}
\end{figure}

\begin{figure}[htbp]
\centering
\includegraphics[width=\linewidth]{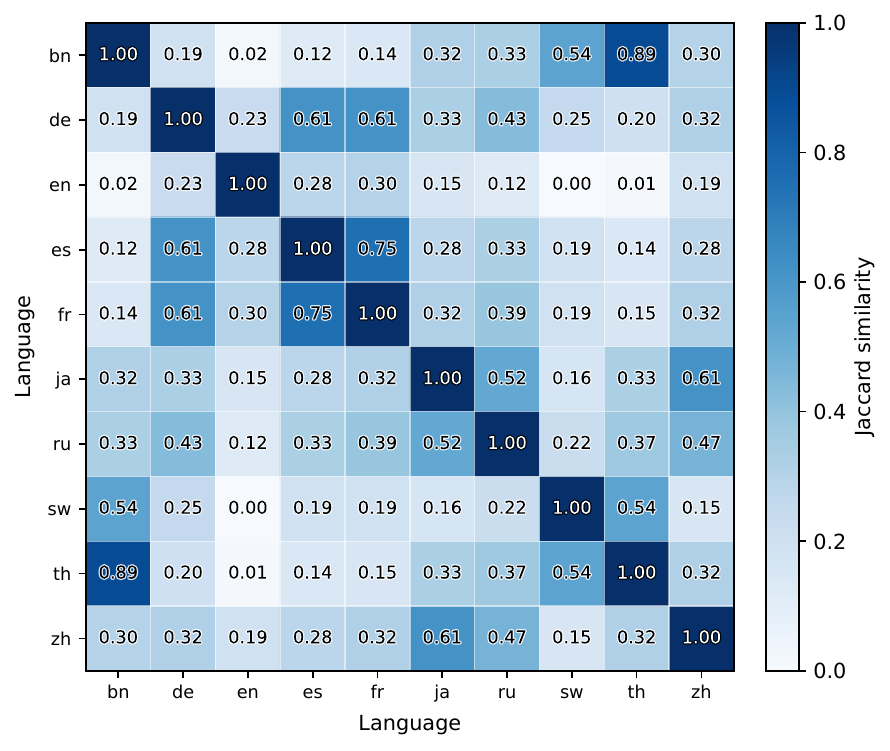}
\caption{Global-level expert activation overlap of Phi-3.5-MoE across languages in MGSM.}
\label{fig:phi_mgsm_global_overlap}
\end{figure}

\begin{figure}[htbp]
\centering
\includegraphics[width=\linewidth]{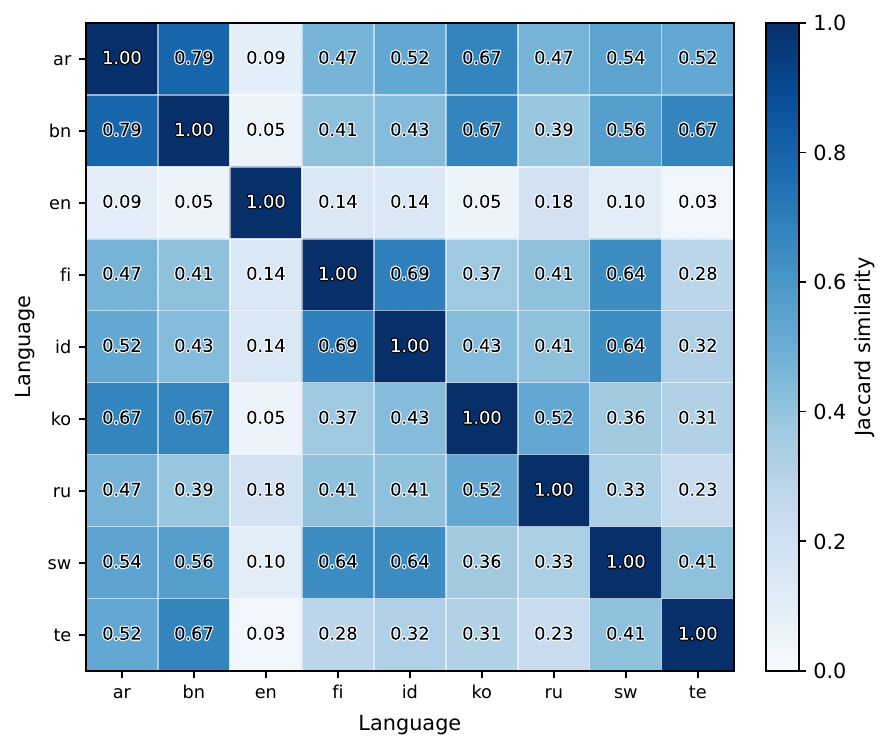}
\caption{Global-level expert activation overlap of Phi-3.5-MoE across languages in TyDiQA.}
\label{fig:phi_tydiqa_global_overlap}
\end{figure}

\section{\rise Selected Expert Visualization}
\label{app:ftse_expert_distribution}

\begin{figure}[h]
\centering
\includegraphics[width=\linewidth]{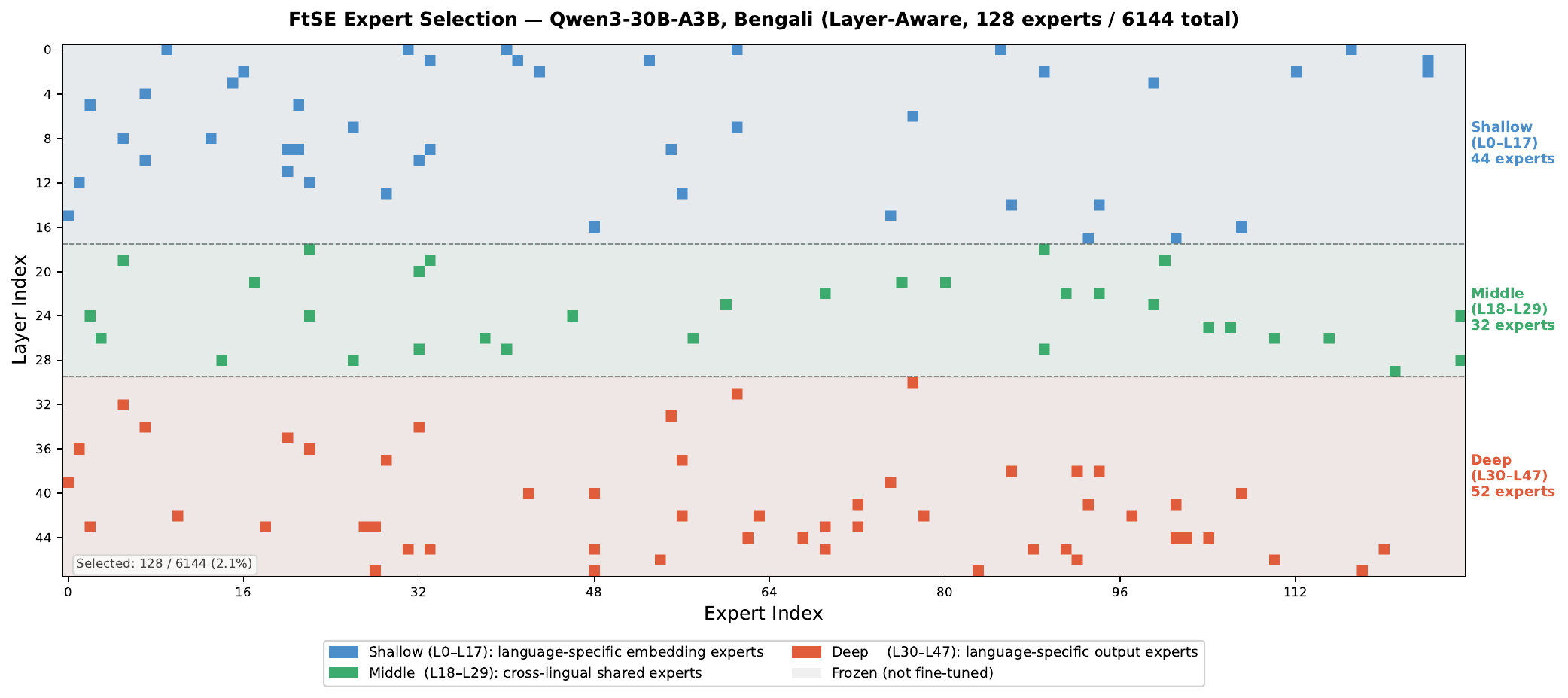}
\caption{Layer-wise expert selection heatmap for Bengali training on \textbf{Qwen3-30B-A3B}. Each column corresponds to a transformer layer and each row to an expert index. Highlighted cells denote experts selected by \rise for gradient updates. The selection density is highest in the middle and deep layers, consistent with the routing isolation observed for Bengali in Table~\ref{tab:mgsm_layerwise_overlap}.}
\label{fig:ftse_expert_heatmap_bengali}
\end{figure}

\begin{figure}[h]
\centering
\includegraphics[width=\linewidth]{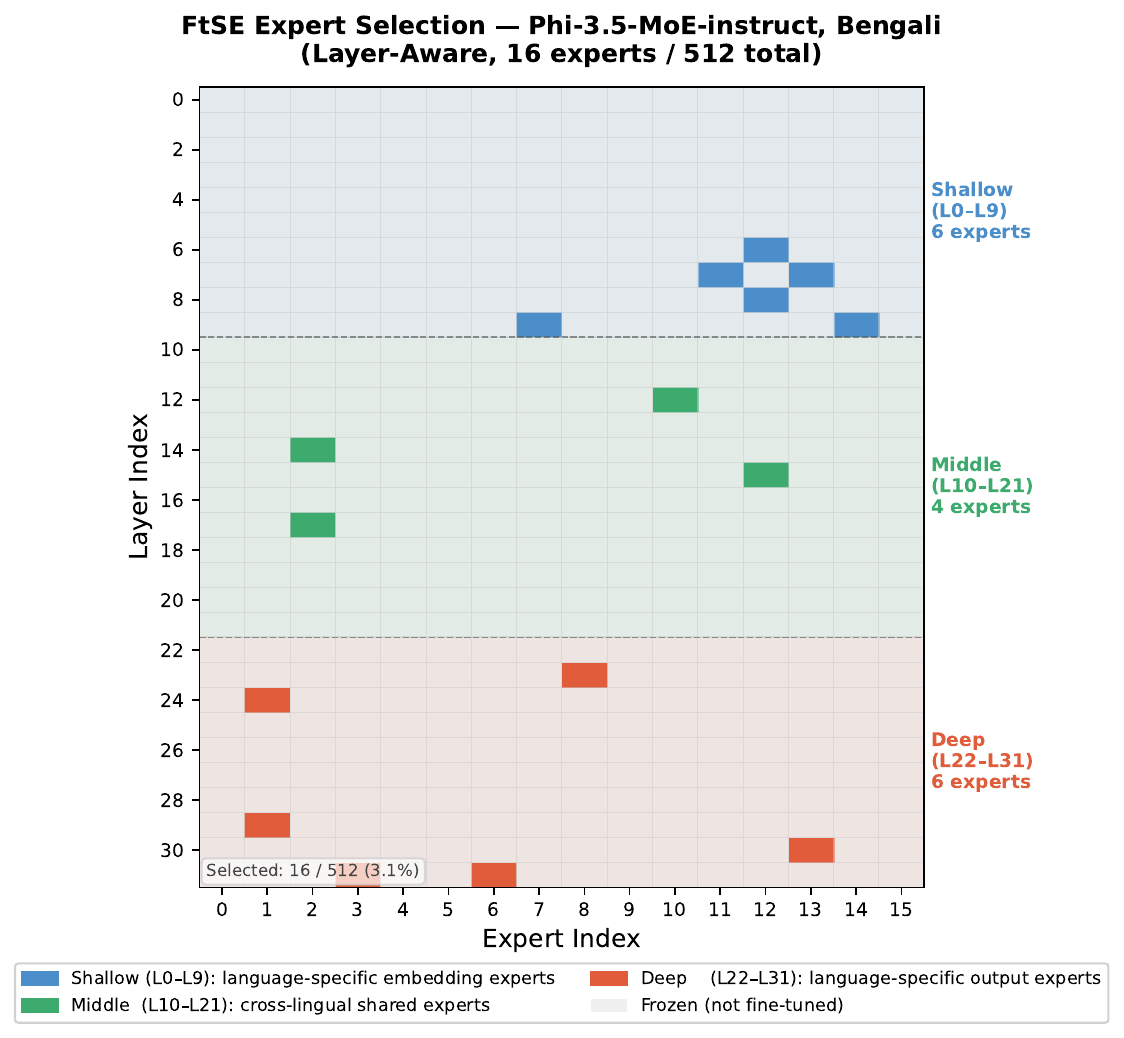}
\caption{Layer-wise expert selection heatmap for Bengali training on \textbf{Phi-3.5-MoE-Instruct}. Despite the architectural differences (32 layers, 16 experts, top-2 routing), \rise again targets experts concentrated in the middle-to-deep range, mirroring the isolation pattern reported in Table~\ref{tab:phi_mgsm_layerwise_overlap}. The sparser overall selection reflects the smaller expert count per layer relative to Qwen3-30B-A3B.}
\label{fig:ftse_expert_heatmap_phi_bengali}
\end{figure}

Figures~\ref{fig:ftse_expert_heatmap_bengali} and~\ref{fig:ftse_expert_heatmap_phi_bengali} visualize the layer-wise distribution of training experts selected by \rise for Bengali on Qwen3-30B-A3B and Phi-3.5-MoE-Instruct, respectively. Each cell indicates whether a given expert in a given layer was selected for training. The heatmaps reveal that \rise concentrates its budget in the middle and deep layers---precisely the layers where Bengali routing diverges most from English (cf.\ Tables~\ref{tab:tydiqa_layerwise_overlap}--\ref{tab:phi_mgsm_layerwise_overlap}). This alignment confirms that the composite score in \rise successfully identifies language-specific experts rather than selecting high-activation experts indiscriminately.

\section{Layer Partition and Expert Selection Settings}
\label{app:layer_settings}

\begin{table*}[t]
\centering
\begin{tabular}{lcc}
\toprule
\textbf{Setting} & \textbf{Qwen3-30B-A3B} & \textbf{Phi-3.5-MoE} \\
\midrule
Total Layers & 48 & 32 \\
Experts per Layer & 128 & 16 \\
Total Expert Pool & 6,144 & 512 \\
\midrule
Shallow Layers & 0--17 (37.5\%) & 0--9 (31.25\%) \\
Middle Layers  & 18--29 (25\%)  & 10--21 (37.5\%) \\
Deep Layers    & 30--47 (37.5\%) & 22--31 (31.25\%) \\
\midrule
Selected Experts $K$ & 128 (2.1\%) & 16 (3.1\%) \\
\quad Shallow budget & 35\% ($\approx$44) & 37.5\% ($\approx$6) \\
\quad Middle budget  & 25\% ($\approx$32) & 25\% ($\approx$4) \\
\quad Deep budget    & 40\% ($\approx$52) & 37.5\% ($\approx$6) \\
\midrule
Shallow criterion & Language specificity & Language specificity \\
Middle criterion  & Cross-lingual overlap & Cross-lingual overlap \\
Deep criterion    & Language specificity & Language specificity \\
\bottomrule
\end{tabular}
\caption{Layer-aware expert selection settings for Qwen3-30B-A3B and Phi-3.5-MoE-Instruct.}
\label{tab:layer_aware_settings}
\end{table*}

Table~\ref{tab:layer_aware_settings} summarizes the layer partition boundaries and expert selection budget used in \rise for both backbone models.
The three functional groups---shallow (language-specific encoding), middle (language-agnostic semantic processing), and deep (language-specific generation)---follow the layer boundaries derived from our routing divergence analysis in Section~\ref{sec:analysis}.
The expert budget $K$ is allocated asymmetrically across groups, concentrating more budget on shallow and deep layers than on the middle layers.

\begin{table}[htbp]
\centering
\small
\setlength{\tabcolsep}{2pt}
\begin{tabular}{@{}cccccc@{}}
\toprule
\textbf{$K$} & \textbf{Weights} & \textbf{Act.} & \textbf{Grad.} & \textbf{Opt.} & \textbf{Total} \\
\midrule
8   & 56.3 GB & 7.9 GB & 72 MB  & 432 MB & \textbf{64.7 GB} \\
16  & 56.3 GB & 7.9 GB & 144 MB & 864 MB & \textbf{65.1 GB} \\
32  & 56.3 GB & 7.9 GB & 288 MB & 1.7 GB & \textbf{66.1 GB} \\
64  & 56.3 GB & 7.9 GB & 576 MB & 3.4 GB & \textbf{68.1 GB} \\
128 & 56.3 GB & 7.9 GB & 1.1 GB & 6.8 GB & \textbf{72.0 GB} \\
\bottomrule
\end{tabular}
\caption{GPU memory usage during training for different numbers of selected experts $K$ on Qwen3-30B-A3B. Note that these figures represent the \emph{minimum} memory footprint without any training data loaded; actual memory consumption may increase significantly with larger batch sizes.}
\label{tab:gpu_memory_usage}
\end{table}

\section{Relationship Between \rise and LoRA}
\label{app:why_not_lora}

We do not treat LoRA as a competing baseline for \rise because the two operate at orthogonal levels of model adaptation and address fundamentally different questions; Section~\ref{sec:rise_lora} instead reports what happens when they are composed.

\rise operates at the \textbf{routing level}: it identifies \emph{which experts} to update, based on a mechanistic analysis of multilingual routing behavior in MoE models. LoRA operates at the \textbf{parameter-update level}: it specifies \emph{how} to update the weight matrices efficiently via low-rank adapters. These are orthogonal design dimensions---\rise concerns the \emph{selection scope} of adaptation, while LoRA concerns the \emph{parameterization} of the update.

\begin{figure}[!t]
    \centering
    \includegraphics[width=\linewidth]{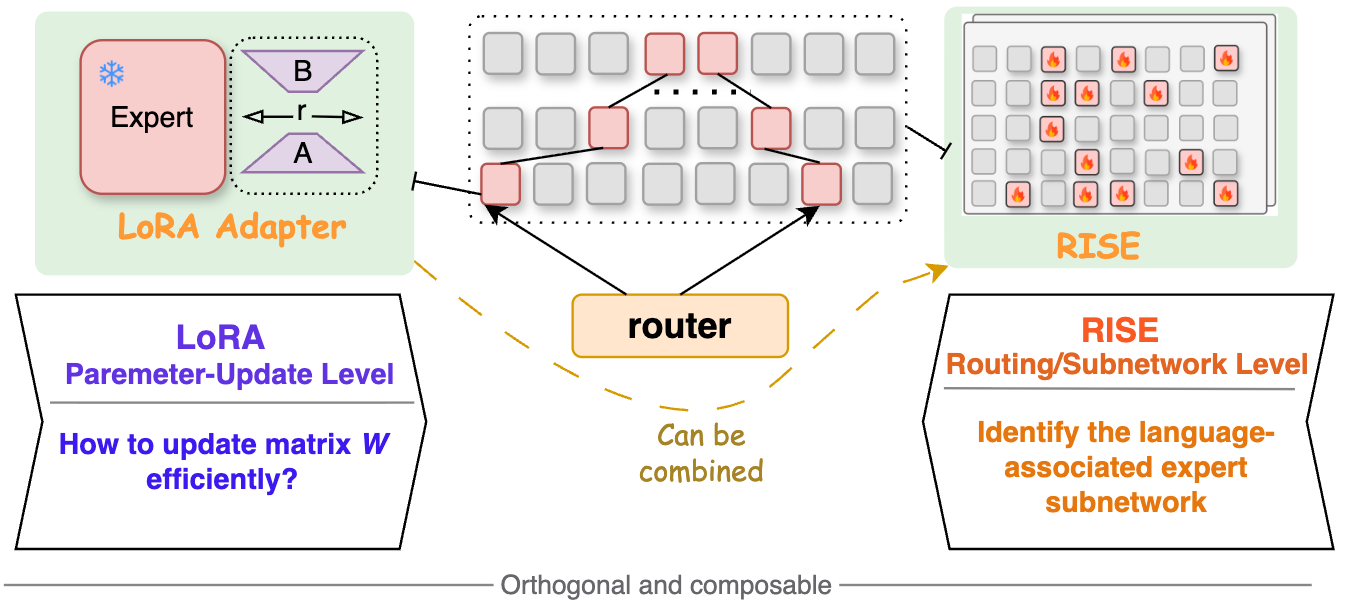}
    \caption{\rise and LoRA are orthogonal and composable: \rise operates at the \textbf{routing level} (which experts to adapt), while LoRA operates at the \textbf{parameter-update level} (how to update them).}
    \label{fig:rise_vs_lora}
\end{figure}

The main contribution of this work is the discovery that multilingual MoE models exhibit structured routing isolation, from which language-specific expert subnetworks can be identified in a principled manner. We intentionally train the selected experts with standard full-rank updates to isolate the contribution of expert selection itself, avoiding confounds introduced by rank constraints or adapter initialization.

Importantly, \rise and LoRA are fully compatible rather than mutually exclusive. One may readily apply LoRA, QLoRA, or other PEFT techniques \emph{within} the \rise-selected subnetwork---freezing all non-selected experts and attaching low-rank adapters only to the selected ones. Under this view, \rise answers \emph{which part} of the MoE model to adapt, while LoRA answers \emph{how} to update that part efficiently. Figure~\ref{fig:rise_vs_lora} illustrates this orthogonal and composable relationship.

For this reason, treating LoRA as a direct competing baseline would conflate two orthogonal design dimensions and blur the central contribution of this paper. We note that prior work has already explored LoRA-style adaptation within MoE training~\citep{moelora}, further supporting the view that LoRA is a natural \emph{complement} to \rise---applicable within the selected subnetwork for additional parameter efficiency---rather than an alternative to it.

\enlargethispage{2\baselineskip}

\input{contents/big_case}

%% file: contents/big_case.tex
\tcbset{
  casebox/.style={
    colback=gray!4, colframe=gray!50,
    fonttitle=\bfseries\small,
    boxrule=0.7pt, arc=3pt,
    top=3pt, bottom=3pt, left=4pt, right=4pt,
    before skip=2pt, after skip=2pt
  },
  vbox/.style={
    colback=red!5, colframe=red!45,
    title={\footnotesize\bfseries\color{red!65!black}Vanilla\ \xmark},
    boxrule=0.5pt, arc=2pt,
    top=2pt, bottom=2pt, left=3pt, right=3pt,
    before skip=2pt, after skip=1pt
  },
  fbox/.style={
    colback=green!5, colframe=green!50!black!55,
    title={\footnotesize\bfseries\color{green!50!black}\rise (Ours)\ \cmark},
    boxrule=0.5pt, arc=2pt,
    top=2pt, bottom=2pt, left=3pt, right=3pt,
    before skip=1pt, after skip=1pt
  }
}

\section{Case Study}
\label{case_study}
\makeatletter
\setlength{\@fptop}{0pt}
\setlength{\@dblfptop}{0pt}
\makeatother
\begin{figure*}[p]
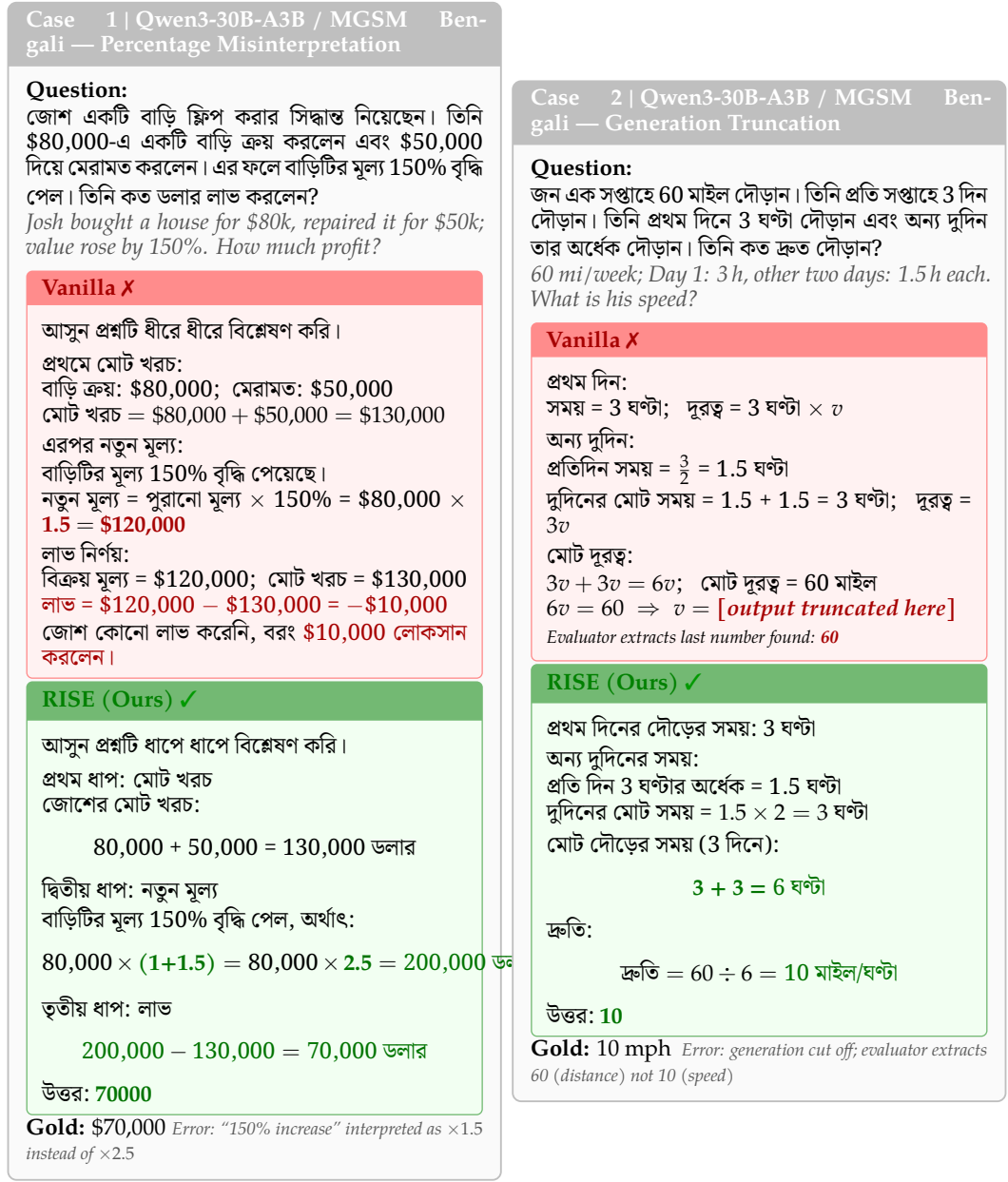

\centering\small

\begin{minipage}[c]{0.495\linewidth}

\begin{tcolorbox}[casebox,
  title={Case 1\;|\;Qwen3-30B-A3B\;/\;MGSM Bengali\;---\;Percentage Misinterpretation}]

\textbf{Question:}\\[1pt]
\beng{জোশ একটি বাড়ি ফ্লিপ করার সিদ্ধান্ত নিয়েছেন। তিনি \$80,000-এ একটি বাড়ি ক্রয় করলেন
এবং \$50,000 দিয়ে মেরামত করলেন। এর ফলে বাড়িটির মূল্য 150\% বৃদ্ধি পেল।
তিনি কত ডলার লাভ করলেন?}\\
{\color{gray!65!black}\textit{Josh bought a house for \$80k, repaired it for \$50k;
value rose by 150\%. How much profit?}}

\vspace{2pt}
\begin{tcolorbox}[vbox]
\beng{আসুন প্রশ্নটি ধীরে ধীরে বিশ্লেষণ করি।}\\[2pt]
{\bengalifont প্রথমে মোট খরচ:}\\
\beng{বাড়ি ক্রয়: \$80,000;~~মেরামত: \$50,000}\\
\beng{মোট খরচ} $= \$80{,}000 + \$50{,}000 = \$130{,}000$\\[2pt]
{\bengalifont এরপর নতুন মূল্য:}\\
\beng{বাড়িটির মূল্য 150\% বৃদ্ধি পেয়েছে।}\\
\beng{নতুন মূল্য = পুরানো মূল্য $\times$ 150\% =
  \$80,000 $\times$} \herr{1.5} $=$ \herr{\$120{,}000}\\[2pt]
{\bengalifont লাভ নির্ণয়:}\\
\beng{বিক্রয় মূল্য = \$120,000;~~মোট খরচ = \$130,000}\\
{\color{red!65!black}\beng{লাভ = \$120,000 $-$ \$130,000 = $-$\$10,000}}\\[1pt]
\beng{জোশ কোনো লাভ করেনি, বরং}~{\color{red!65!black}\beng{\$10,000 লোকসান করলেন।}}
\end{tcolorbox}

\begin{tcolorbox}[fbox]
\beng{আসুন প্রশ্নটি ধাপে ধাপে বিশ্লেষণ করি।}\\[2pt]
{\bengalifont প্রথম ধাপ: মোট খরচ}\\
\beng{জোশের মোট খরচ:}
\[\text{\beng{80{,}000 + 50{,}000 = 130{,}000~ডলার}}\]
{\bengalifont দ্বিতীয় ধাপ: নতুন মূল্য}\\
\beng{বাড়িটির মূল্য 150\% বৃদ্ধি পেল, অর্থাৎ:}
\[\text{\beng{80{,}000}} \times \hcorr{(1+1.5)} = \text{\beng{80{,}000}} \times
  \hcorr{2.5} = {\color{green!45!black}\text{\beng{200{,}000~ডলার}}}\]
{\bengalifont তৃতীয় ধাপ: লাভ}
\[{\color{green!45!black}\text{\beng{200{,}000}}} - {\color{green!45!black}\text{\beng{130{,}000}}} =
  {\color{green!45!black}\text{\beng{70{,}000~ডলার}}}\]
\beng{উত্তর:}~\hcorr{70000}
\end{tcolorbox}

\noindent\textbf{Gold:} \$70,000\hfill
{\scriptsize\color{gray!70!black}\textit{Error: ``150\% increase'' interpreted as $\times1.5$ instead of $\times2.5$}}
\end{tcolorbox}

\end{minipage}
\hfill
\begin{minipage}[c]{0.495\linewidth}

\begin{tcolorbox}[casebox,
  title={Case 2\;|\;Qwen3-30B-A3B\;/\;MGSM Bengali\;---\;Generation Truncation}]

\textbf{Question:}\\[1pt]
\beng{জন এক সপ্তাহে 60 মাইল দৌড়ান। তিনি প্রতি সপ্তাহে 3 দিন দৌড়ান।
তিনি প্রথম দিনে 3 ঘণ্টা দৌড়ান এবং অন্য দুদিন তার অর্ধেক দৌড়ান।
তিনি কত দ্রুত দৌড়ান?}\\
{\color{gray!65!black}\textit{60 mi/week; Day~1: 3\,h, other two days: 1.5\,h each.
What is his speed?}}

\vspace{2pt}
\begin{tcolorbox}[vbox]
{\bengalifont প্রথম দিন:}\\
\beng{সময় = 3 ঘণ্টা;~~~দূরত্ব = 3 ঘণ্টা $\times$ $v$}\\[2pt]
{\bengalifont অন্য দুদিন:}\\
\beng{প্রতিদিন সময় = $\frac{3}{2}$ = 1.5 ঘণ্টা}\\
\beng{দুদিনের মোট সময় = 1.5 + 1.5 = 3 ঘণ্টা;~~~দূরত্ব = $3v$}\\[2pt]
{\bengalifont মোট দূরত্ব:}\\
\beng{$3v + 3v = 6v$;~~~মোট দূরত্ব = 60 মাইল}\\
$6v = 60 \;\Rightarrow\; v = \;$\herr{\textit{[output truncated here]}}\\[1pt]
{\scriptsize\textit{Evaluator extracts last number found: \herr{60}}}
\end{tcolorbox}

\begin{tcolorbox}[fbox]
{\bengalifont প্রথম দিনের দৌড়ের সময়:}~\beng{3 ঘণ্টা}\\[2pt]
{\bengalifont অন্য দুদিনের সময়:}\\
\beng{প্রতি দিন 3 ঘণ্টার অর্ধেক = 1.5 ঘণ্টা}\\
\beng{দুদিনের মোট সময় = $1.5 \times 2 = 3$ ঘণ্টা}\\[2pt]
{\bengalifont মোট দৌড়ের সময় (3 দিনে):}
\[{\color{green!45!black}3 + 3 = \text{\beng{6~ঘণ্টা}}}\]
{\bengalifont দ্রুতি:}
\[\text{\beng{দ্রুতি}} = 60 \div 6 = {\color{green!45!black}\text{\beng{10~মাইল/ঘণ্টা}}}\]
\beng{উত্তর:}~\hcorr{10}
\end{tcolorbox}

\noindent\textbf{Gold:} 10 mph\hfill
{\scriptsize\color{gray!70!black}\textit{Error: generation cut off; evaluator extracts 60 (distance) not 10 (speed)}}
\end{tcolorbox}

\end{minipage}

\caption{Qualitative case study (\textbf{Cases 1--2}) comparing \textbf{Vanilla} (base model)
and \textbf{\rise} (ours) on Bengali MGSM\@.
\textcolor{red!65!black}{\textbf{Red bold}} marks incorrect steps;
\textcolor{green!45!black}{\textbf{green bold}} marks correct reasoning.
Both cases show mathematical reasoning errors in Qwen3-30B-A3B:
Case~1 misinterprets ``150\% increase'' as $\times1.5$ instead of $\times2.5$;
Case~2 suffers premature generation truncation, causing the evaluator to extract
the distance (60) rather than the speed (10).
See Figure~\ref{fig:case_study_cd} for Cases 3--4.}
\label{fig:case_study}
\end{figure*}

\begin{figure*}[p]
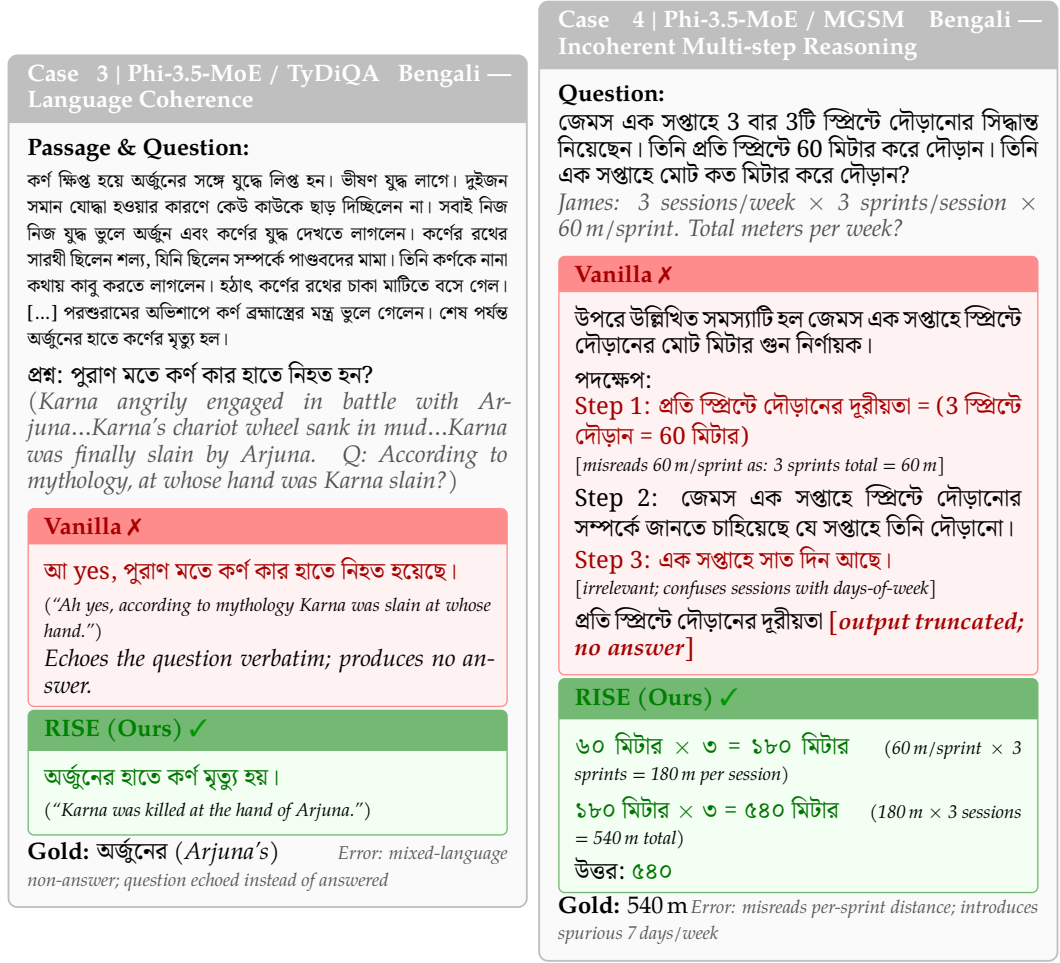

\centering\small

\begin{minipage}[c]{0.495\linewidth}

\begin{tcolorbox}[casebox,
  title={Case 3\;|\;Phi-3.5-MoE\;/\;TyDiQA Bengali\;---\;Language Coherence}]

\textbf{Passage \& Question:}\\[2pt]
{\scriptsize\beng{
কর্ণ ক্ষিপ্ত হয়ে অর্জুনের সঙ্গে যুদ্ধে লিপ্ত হন। ভীষণ যুদ্ধ লাগে। দুইজন সমান যোদ্ধা
হওয়ার কারণে কেউ কাউকে ছাড় দিচ্ছিলেন না। সবাই নিজ নিজ যুদ্ধ ভুলে অর্জুন এবং কর্ণের
যুদ্ধ দেখতে লাগলেন। কর্ণের রথের সারথী ছিলেন শল্য, যিনি ছিলেন সম্পর্কে পাণ্ডবদের মামা।
তিনি কর্ণকে নানা কথায় কাবু করতে লাগলেন। হঠাৎ কর্ণের রথের চাকা মাটিতে বসে গেল।
[\ldots] পরশুরামের অভিশাপে কর্ণ ব্রহ্মাস্ত্রের মন্ত্র ভুলে গেলেন।
শেষ পর্যন্ত অর্জুনের হাতে কর্ণের মৃত্যু হল।}}\\[3pt]
{\small{\bengalifont প্রশ্ন:} \beng{পুরাণ মতে কর্ণ কার হাতে নিহত হন?}}\\
{\color{gray!65!black}\textit{(Karna angrily engaged in battle with Arjuna\ldots{}Karna's
chariot wheel sank in mud\ldots{}Karna was finally slain by Arjuna.
Q: According to mythology, at whose hand was Karna slain?)}}

\smallskip
\begin{tcolorbox}[vbox]
{\color{red!65!black}\beng{আ yes, পুরাণ মতে কর্ণ কার হাতে নিহত হয়েছে।}}\\[2pt]
{\scriptsize\textit{(``Ah yes, according to mythology Karna was slain at whose hand.'')}}\\[1pt]
\textit{Echoes the question verbatim; produces no answer.}
\end{tcolorbox}

\begin{tcolorbox}[fbox]
{\color{green!45!black}\beng{অর্জুনের হাতে কর্ণ মৃত্যু হয়।}}\\[2pt]
{\scriptsize\textit{(``Karna was killed at the hand of Arjuna.'')}}
\end{tcolorbox}

\noindent\textbf{Gold:} \beng{অর্জুনের} \textit{(Arjuna's)}\hfill
{\scriptsize\color{gray!70!black}\textit{Error: mixed-language non-answer; question echoed instead of answered}}
\end{tcolorbox}

\end{minipage}
\hfill
\begin{minipage}[c]{0.495\linewidth}

\begin{tcolorbox}[casebox,
  title={Case 4\;|\;Phi-3.5-MoE\;/\;MGSM Bengali\;---\;Incoherent Multi-step Reasoning}]

\textbf{Question:}\\[1pt]
\beng{জেমস এক সপ্তাহে 3 বার 3টি স্প্রিন্টে দৌড়ানোর সিদ্ধান্ত নিয়েছেন।
তিনি প্রতি স্প্রিন্টে 60 মিটার করে দৌড়ান।
তিনি এক সপ্তাহে মোট কত মিটার করে দৌড়ান?}\\
{\color{gray!65!black}\textit{James: 3 sessions/week $\times$ 3 sprints/session $\times$
60\,m/sprint. Total meters per week?}}

\smallskip
\begin{tcolorbox}[vbox]
\beng{উপরে উল্লিখিত সমস্যাটি হল জেমস এক সপ্তাহে স্প্রিন্টে
দৌড়ানের মোট মিটার গুন নির্ণায়ক।}\\[3pt]
{\bengalifont পদক্ষেপ:}\\
{\color{red!65!black}\beng{Step~1: প্রতি স্প্রিন্টে দৌড়ানের দূরীয়তা =
  (3 স্প্রিন্টে দৌড়ান = 60 মিটার)}}\\
{\scriptsize\textit{[misreads 60\,m/sprint as: 3 sprints total = 60\,m]}}\\[2pt]
\beng{Step~2: জেমস এক সপ্তাহে স্প্রিন্টে দৌড়ানোর সম্পর্কে
  জানতে চাহিয়েছে যে সপ্তাহে তিনি দৌড়ানো।}\\[2pt]
{\color{red!65!black}\beng{Step~3: এক সপ্তাহে সাত দিন আছে।}}\\
{\scriptsize\textit{[irrelevant; confuses sessions with days-of-week]}}\\[2pt]
\beng{প্রতি স্প্রিন্টে দৌড়ানের দূরীয়তা}~\herr{\textit{[output truncated; no answer]}}
\end{tcolorbox}

\begin{tcolorbox}[fbox]
{\color{green!45!black}\beng{৬০ মিটার $\times$ ৩ = ১৮০ মিটার}}
\quad{\scriptsize\textit{(60\,m/sprint $\times$ 3 sprints = 180\,m per session)}}\\[3pt]
{\color{green!45!black}\beng{১৮০ মিটার $\times$ ৩ = ৫৪০ মিটার}}
\quad{\scriptsize\textit{(180\,m $\times$ 3 sessions = 540\,m total)}}\\[3pt]
\beng{উত্তর:~}{\color{green!45!black}\beng{৫৪০}}
\end{tcolorbox}

\noindent\textbf{Gold:} 540\,m\hfill
{\scriptsize\color{gray!70!black}\textit{Error: misreads per-sprint distance; introduces spurious 7\,days/week}}
\end{tcolorbox}

\end{minipage}

\caption{Qualitative case study (\textbf{Cases 3--4}) comparing \textbf{Vanilla} (base model)
and \textbf{\rise} (ours) on Bengali TyDiQA and MGSM\@.
\textcolor{red!65!black}{\textbf{Red bold}} marks incorrect steps;
\textcolor{green!45!black}{\textbf{green bold}} marks correct reasoning.
Both cases show incoherent or mixed-language outputs in Phi-3.5-MoE:
Case~3 produces a question-echoing non-answer with mixed-language output;
Case~4 misreads per-sprint distance and introduces a spurious 7-day/week step,
with output truncated before any answer is given.
\rise{}'s Bengali-specialized expert activation corrects both failures.}
\label{fig:case_study_cd}
\end{figure*}